%

\documentclass[runningheads]{llncs}
\usepackage{graphicx}
%
\usepackage{amsmath,amssymb} 
\usepackage{color}

\usepackage{algorithm}
\usepackage{algorithmic}
\usepackage{url}
\usepackage{epigraph}
\usepackage{enumitem}
\usepackage{color}
\usepackage{soul}
\usepackage{subfigure}
\usepackage{mdframed}

\newcommand{\qqed}{\hfill$\square$}

\newcommand{\bx}{\mathbf{x}}

\newcommand{\ba}{\mathbf{a}}

\newcommand{\bt}{\mathbf{t}}

\newcommand{\cP}{\mathcal{P}}

\newcommand{\cQ}{\mathcal{Q}}

\newcommand{\cG}{\mathcal{G}}

\newcommand{\cB}{\mathcal{B}}

\newcommand{\cI}{\mathcal{I}}

\newcommand{\cD}{\mathcal{D}}

\newcommand{\cO}{\mathcal{O}}

\newcommand{\cC}{\mathcal{C}}

\newcommand{\cK}{\mathcal{K}}

\newcommand{\bbI}{\mathbb{I}}

\newcommand{\LP}{\mathrm{LP}}
\newcommand{\poly}{\mathrm{poly}}

\def\med{\operatorname*{med\,}}

\begin{document}
\title{Robust Fitting in Computer Vision:\\Easy or Hard?} 

\titlerunning{Robust Fitting in Computer Vision: Easy or Hard?} 
%
\author{Tat-Jun Chin \and
Zhipeng Cai \and
Frank Neumann}
%
\index{Chin, Tat-Jun}
\authorrunning{T.-J. Chin, Z. Cai, and F. Neumann}
%

\institute{School of Computer Science, The University of Adelaide}
\maketitle              
\begin{abstract}
Robust model fitting plays a vital role in computer vision, and research into algorithms for robust fitting continues to be active. Arguably the most popular paradigm for robust fitting in computer vision is \emph{consensus maximisation}, which strives to find the model parameters that maximise the number of inliers. Despite the significant developments in algorithms for consensus maximisation, there has been a lack of fundamental analysis of the problem in the computer vision literature. In particular, whether consensus maximisation is ``tractable" remains a question that has not been rigorously dealt with, thus making it difficult to assess and compare the performance of proposed algorithms, relative to what is theoretically achievable. To shed light on these issues, we present several computational hardness results for consensus maximisation. Our results underline the fundamental intractability of the problem, and resolve several ambiguities existing in the literature.

\keywords{Robust fitting, consensus maximisation, inlier set maximisation, computational hardness.}
\end{abstract}

\section{Introduction}

Robustly fitting a geometric model onto noisy and outlier-contaminated data is a necessary capability in computer vision~\cite{meer04}, due to the imperfectness of data acquisition systems and preprocessing algorithms (e.g., edge detection, keypoint detection and matching). Without robustness against outliers, the estimated geometric model will be biased, leading to failure in the overall pipeline.

In computer vision, robust fitting is typically performed under the framework of \emph{inlier set maximisation}, a.k.a.~\emph{consensus maximisation}~\cite{fischler81}, where one seeks the model with the most number of inliers. For concreteness, say we wish to estimate the parameter vector $\bx \in \mathbb{R}^d$ that defines the linear relationship $\ba^T \bx = b$ from a set of outlier-contaminated measurements $\cD = \{(\ba_i,b_i)\}^{N}_{i=1}$. The consensus maximisation formulation for this problem is as follows.

\begin{problem}[MAXCON]
Given input data $\cD = \{(\ba_i,b_i)\}^{N}_{i=1}$, where $\ba_i \in \mathbb{R}^d$ and $b_i \in \mathbb{R}$, and an inlier threshold $
\epsilon \in \mathbb{R}_+$, find the $\bx \in \mathbb{R}^d$ that maximises
\begin{align}\label{equ:consensus}
\mathrm{\Psi}_\epsilon(\bx \mid \cD) = \sum_{i=1}^N \bbI\left( |\ba_i^T\bx - b_i | \le \epsilon \right),
\end{align}
where $\bbI$ returns $1$ if its input predicate is true, and $0$ otherwise.
\end{problem}

The quantity $| \ba_i^T\bx - b_i |$ is the \emph{residual} of the $i$-th measurement with respect to $\bx$, and the value given by $\mathrm{\Psi}_\epsilon(\bx \mid \cD)$ is the \emph{consensus} of $\bx$ with respect to $\cD$. Intuitively, the consensus of $\bx$ is the number of inliers of $\bx$. For the robust estimate to fit the inlier structure well, the inlier threshold $\epsilon$ must be set to an appropriate value; the large number of applications that employ the consensus maximisation framework indicate that this is usually not an obstacle.

Developing algorithms for robust fitting, specifically for consensus maximisation, is an active research area in computer vision. Currently, the most popular algorithms belong to the class of randomised sampling techniques, i.e., RANSAC~\cite{fischler81} and its variants~\cite{choi09,raguram13}. Unfortunately, such techniques do not provide certainty of finding satisfactory solutions, let alone optimal ones~\cite{tran14}.

Increasingly, attention is given to constructing \emph{globally optimal} algorithms for robust fitting, e.g.,~\cite{li09,zheng11,enqvist12,bazin13,yang14,parrabustos14,enqvist15,chin15,campbell17}. Such algorithms are able to deterministically calculate the best possible solution, i.e., the model with the highest achievable consensus. This mathematical guarantee is regarded as desirable, especially in comparison to the ``rough" solutions provided by random sampling heuristics.

Recent progress in globally optimal algorithms for consensus maximisation seems to suggest that global solutions can be obtained efficiently or tractably~\cite{li09,zheng11,enqvist12,bazin13,yang14,parrabustos14,enqvist15,chin15,campbell17}. Moreover, decent empirical performances have been reported. This raises hopes that good alternatives to the random sampling methods are now available. However, to what extent is the problem solved? Can we expect the global algorithms to perform well in general? Are there fundamental obstacles toward efficient robust fitting algorithms? What do we even mean by ``efficient"?

\subsection{Our contributions and their implications}\label{sec:contributions}

Our contributions are \emph{theoretical}. We resolve the above ambiguities in the literature, by proving the following computational hardness results. The implications of each result are also listed below.
\begin{mdframed}
MAXCON is NP-hard (Section~\ref{sec:nphard}).
\end{mdframed}
\begin{itemize}[parsep=0em,topsep=0em]
\item[$\implies$] There are no algorithms that can solve MAXCON in time polynomial to the input size, which is proportional to $N$ and $d$.
\end{itemize}
\begin{mdframed}
MAXCON is W[1]-hard in the dimension $d$ (Section~\ref{sec:w1hard}).
\end{mdframed}
\begin{itemize}[parsep=0em,topsep=0em]
\item[$\implies$] There are no algorithms that can solve MAXCON in time $f(d)\mathrm{poly}(N)$, where $f(d)$ is an arbitrary function of $d$, and $\mathrm{poly}(N)$ is a polynomial of $N$.
\end{itemize}
\begin{mdframed}
MAXCON is APX-hard (Section~\ref{sec:apxhard}).
\end{mdframed}
\begin{itemize}[parsep=2pt,topsep=2pt]
\item[$\implies$] There are no polynomial time algorithms that can approximate MAXCON up to $(1-\delta)\psi^*$ for any known factor $\delta$, where $\psi^\ast$ is the maximum consensus.
\end{itemize}
As usual, the implications of the hardness results are subject to the standard complexity assumptions P$\ne$NP~\cite{garey90} and FPT$\ne$W[1]-hard~\cite{downey99}.

Our analysis indicates the ``extreme" difficulty of consensus maximisation. MAXCON is not only \emph{intractable} (by standard notions of intractability~\cite{garey90,downey99}), the W[1]-hardness result also suggests that any global algorithm will scale exponentially in a function of $d$, i.e., $N^{f(d)}$. In fact, if a conjecture of Erickson et al.~\cite{erickson06} holds, MAXCON cannot be solved faster than $N^d$. Thus, the decent performances in~\cite{li09,zheng11,enqvist12,bazin13,yang14,parrabustos14,enqvist15,chin15,campbell17} are unlikely to extend to the general cases in practical settings, where $N \ge 1000$ and $d \ge 6$ are common. More pessimistically, APX-hardness shows that MAXCON is impossible to approximate, in that there are no polynomial time approximation schemes (PTAS)~\cite{vazirani01} for MAXCON\footnote{Since RANSAC does not provide any approximation guarantees, it is not an ``approximation scheme" by standard definition~\cite{vazirani01}.}. 

A slightly positive result is as follows.
\begin{mdframed}
MAXCON is FPT (fixed parameter tractable) in the number of outliers $o$ and dimension $d$ (Section~\ref{sec:fpt}).
\end{mdframed}
This is achieved by applying a special case of the algorithm of Chin et al.~\cite{chin15} on MAXCON to yield a runtime of $\cO(d^o)\textrm{poly}(N,d)$. However, this still scales exponentially in $o$, which can be large in practice (e.g., $o \ge 100$).

\subsection{How are our theoretical results useful?}

First, our results clarify the ambiguities on the efficiency and solvability of consensus maximisation alluded to above. Second, our analysis shows how the effort scales with the different input size parameters, thus suggesting more cogent ways for researchers to test/compare algorithms. Third, since developing algorithms for consensus maximisation is an active topic, our hardness results encourage researchers to consider alternative paradigms of optimisation, e.g., deterministically convergent heuristic algorithms~\cite{le17,purkait17,cai18} or preprocessing techniques~\cite{svarm14,parrabustos15,chin16}. 


\subsection{What about non-linear models?}

Our results are based specifically on MAXCON, which is concerned with fitting linear models. In practice, computer vision applications require the fitting of non-linear geometric models (e.g., fundamental matrix, homography, rotation). While a case-by-case treatment is ideal, it is unlikely that non-linear consensus maximisation will be easier than linear consensus maximisation~\cite{johnson78,ben-david02,aronov08}.

\subsection{Why not employ other robust statistical procedures?}

Our purpose here is not to benchmark or advocate certain robust criteria. Rather, our primary aim is to establish the fundamental difficulty of consensus maximisation, which is widely used in computer vision. Second, it is unlikely that other robust criteria are easier to solve~\cite{bernholt06}. Although some that use differentiable robust loss functions (e.g., M-estimators) can be solved up to local optimality, it is unknown how far the local optima deviate from the global solution.

The rest of the paper is devoted to developing the above hardness results.

\section{NP-hardness}\label{sec:nphard}

The decision version of MAXCON is as follows.
\begin{problem}[MAXCON-D]\label{prob:maxcon}
Given data $\cD = \{(\ba_i,b_i)\}^{N}_{i=1}$, an inlier threshold $\epsilon \in \mathbb{R}_+$, and a number $\psi \in \mathbb{N}_+$, does there exist $\bx \in \mathbb{R}^d$ such that $\Psi_\epsilon(\bx \mid \cD) \ge \psi$?
\end{problem} 

Another well-known robust fitting paradigm is least median squares (LMS), where we seek the vector $\bx$ that minimises the median of the residuals
\begin{align}\label{equ:lms}
\underset{\bx \in \mathbb{R}^d}{\min} \hspace{1em} \med\left( |\ba_1^T\bx - b_1|, \dots, |\ba_N^T\bx - b_N| \right).
\end{align}
LMS can be generalised by minimising the $k$-th largest residual instead
\begin{align}\label{equ:lkos}
\underset{\bx \in \mathbb{R}^d}{\min} \hspace{1em} \mathrm{kos}\left( |\ba_1^T\bx - b_1|, \dots, |\ba_N^T\bx - b_N| \right),
\end{align}
where function $\mathrm{kos}$ returns its $k$-th largest input value.

Geometrically, LMS seeks the \emph{slab} of the \emph{smallest width} that contains \emph{half} of the data points $\cD$ in $\mathbb{R}^{d+1}$. A slab in $\mathbb{R}^{d+1}$ is defined by a normal vector $\bx$ and width $w$ as
\begin{align}
h_w(\bx) = \left\{ (\ba,b) \in \mathbb{R}^{d+1} \; \left| \; |\ba^T\bx-b | \le \frac{1}{2}w \right. \right\}.
\end{align}
Problem~\eqref{equ:lkos} thus seeks the thinnest slab that contains $k$ of the points. The decision version of~\eqref{equ:lkos} is as follows.

\begin{problem}[k-SLAB]\label{prob:kslab}
Given data $\cD = \{(\ba_i,b_i)\}^{N}_{i=1}$, an integer $k$ where $1 \le k \le N$, and a number $w^\prime \in \mathbb{R}_+$, does there exist $\bx \in \mathbb{R}^d$ such that $k$ of the members of $\cD$ are contained in a slab $h_w(\bx)$ of width at most $w^\prime$?
\end{problem}
k-SLAB has been proven to be NP-complete in~\cite{erickson06}.

\begin{theorem}
MAXCON-D is NP-complete.
\end{theorem}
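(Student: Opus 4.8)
The plan is to establish NP-completeness in two parts: membership in NP and NP-hardness. Membership is routine, so the bulk of the effort goes into the hardness reduction. The natural strategy is to reduce from k-SLAB, which is already known to be NP-complete by Erickson et al.~\cite{erickson06}. Since k-SLAB and MAXCON-D are geometrically very close---both concern fitting a slab to as many points as possible---I expect a tight, almost syntactic correspondence between the two problems, which is exactly what we want for a clean reduction.

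First, for membership in NP, I would argue that given a candidate solution one can verify it efficiently. The subtle point is that the witness $\bx \in \mathbb{R}^d$ is a real vector, so I cannot simply guess it bit-by-bit. The standard fix is to observe that the optimal consensus is achieved on a combinatorial structure: the set of inliers determines $\bx$ up to the boundary constraints, so it suffices to certify a subset $S \subseteq \cD$ of size at least $\psi$ together with the claim that some $\bx$ places all of $S$ within the slab of half-width $\epsilon$. Feasibility of such an $\bx$ is a linear-programming feasibility question (the constraints $-\epsilon \le \ba_i^T\bx - b_i \le \epsilon$ for $i \in S$ are linear in $\bx$), which is decidable in polynomial time. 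Hence MAXCON-D $\in$ NP.

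Second, for hardness, I would give a polynomial-time reduction from k-SLAB to MAXCON-D. The key observation is that the width parameter $w'$ of a slab in k-SLAB corresponds directly to an inlier threshold in MAXCON: a slab $h_w(\bx)$ of width $w'$ is precisely the set of points with residual $|\ba^T\bx - b| \le \tfrac{1}{2}w'$. Therefore, given an instance $(\cD, k, w')$ of k-SLAB, I would construct the MAXCON-D instance $(\cD, \epsilon, \psi)$ by setting $\epsilon = \tfrac{1}{2}w'$ and $\psi = k$, leaving the data $\cD$ unchanged. Under this mapping, there exists $\bx$ with $k$ points inside a slab of width at most $w'$ if and only if there exists $\bx$ with $\Psi_\epsilon(\bx \mid \cD) \ge \psi$, because the inlier indicator $\bbI(|\ba_i^T\bx - b_i| \le \epsilon)$ counts exactly those points lying in $h_{2\epsilon}(\bx)$. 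This equivalence gives correctness in both directions, and the construction is clearly computable in polynomial (indeed linear) time.

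The main ``obstacle''---though it is conceptual rather than technical---is to make precise that these two problems really are the same once the parameters are aligned, and in particular to handle the ``width at most $w'$'' clause versus the fixed-threshold formulation. Since a thinner slab only loses points, the existence of a width-$\le w'$ slab containing $k$ points is equivalent to the existence of a width-exactly-$w'$ (i.e.~half-width-$\epsilon$) slab containing $k$ points, so no discrepancy arises. I would close by noting that the reduction preserves the data dimension $d$, which is the feature we will exploit later when strengthening NP-hardness to W[1]-hardness in Section~\ref{sec:w1hard}. Combining membership in NP with the hardness reduction yields that MAXCON-D is NP-complete. \qqed
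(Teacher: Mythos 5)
Your proof is correct and uses exactly the same reduction as the paper: keep $\cD$, set $\epsilon = \tfrac{1}{2}w'$ and $\psi = k$, and observe that inliers of $\bx$ are precisely the points in the width-$w'$ slab $h_{w'}(\bx)$. Your explicit NP-membership argument (certifying a subset $S$ and checking LP feasibility of the constraints $|\ba_i^T\bx - b_i| \le \epsilon$, $i \in S$) is a welcome addition that the paper's proof leaves implicit, since the reduction alone only establishes NP-hardness.
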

\begin{proof}
Let $\cD$, $k$ and $w^\prime$ define an instance of k-SLAB. This can be reduced to an instance of MAXCON-D by simply reusing the same $\cD$, and setting $\epsilon = \frac{1}{2}w^\prime$ and $\psi = k$. If the answer to k-SLAB is positive, then there is an $\bx$ such that $k$ points from $\cD$ lie within vertical distance of $\frac{1}{2}w^\prime$ from the hyperplane defined by $\bx$, hence $\Psi_\epsilon(\bx \mid \cD)$ must be at least $\psi$ and the answer to MAXCON-D is also positive. Conversely, if the answer to MAXCON-D is positive, then there is an $\bx$ such that $\psi$ points have vertical distance of less than $\epsilon$ to $\bx$, hence a slab that is centred at $\bx$ of width at most $w^\prime$ can enclose $k$ of the points, and the answer to k-SLAB is also positive.\qqed
\end{proof}

The NP-completeness of MAXCON-D implies the NP-hardness of the optimisation version MAXCON. See Sec.~\ref{sec:contributions} for the implications of NP-hardness.

\section{Parametrised complexity}

Parametrised complexity is a branch of algorithmics that investigates the inherent difficulty of problems with respect to structural parameters in the input~\cite{downey99}. In this section, we report several parametrised complexity results of MAXCON.

First, the \emph{consensus set} $\cC_\epsilon(\bx \mid \cD)$ of $\bx$ is defined as
\begin{align}
\cC_\epsilon(\bx \mid \cD) := \{ i \in \{1,\dots,N \} \mid  | \ba^T_i \bx - b_i | \le \epsilon \}.
\end{align}
An equivalent definition of consensus~\eqref{equ:consensus} is thus
\begin{align}
\mathrm{\Psi}_\epsilon(\bx \mid \cD) = |\cC_\epsilon(\bx \mid \cD)|.
\end{align}
Henceforth, we do not distinguish between the integer subset $\cC \subseteq \{1,\dots,N \}$ that indexes a subset of $\cD$, and the actual data that are indexed by $\cC$.

\subsection{XP in the dimension}

The following is the \emph{Chebyshev approximation} problem~\cite[Chapter 2]{cheney66} defined on the input data indexed by $\cC$:
\begin{align}\label{equ:chebyshev}
\underset{\bx \in \mathbb{R}^d}{\min} \hspace{1em} \max _{i \in \cC}~|\ba_i^T\bx - b_i|
\end{align}
Problem~\eqref{equ:chebyshev} has the linear programming (LP) formulation
\begin{align}\tag{$\LP[\cC]$}\label{equ:lp}
\begin{aligned}
& \underset{\bx \in \mathbb{R}^d, \gamma \in \mathbb{R}}{\min}
& & \gamma \\
& \text{s.t.}
& & |\ba_i^T\bx - b_i| \le \gamma, \; i \in \cC,
\end{aligned}
\end{align}
which can be solved in polynomial time. Chebyshev approximation also has the following property.

\begin{lemma}\label{lem:basis}
There is a subset $\cB$ of $\cC$, where $|\cB| \le d+1$, such that
\begin{align}\label{equ:basis}
\underset{\bx \in \mathbb{R}^d}{\min} \hspace{1em} \max _{i \in \cB}~r_i(\bx) = \underset{\bx \in \mathbb{R}^d}{\min} \hspace{1em} \max _{i \in \cC}~r_i(\bx)
\end{align}
\end{lemma}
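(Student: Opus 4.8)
The plan is to exploit the linear-programming structure already exhibited in \eqref{equ:lp}. Write $w(\cC) := \min_{\bx} \max_{i \in \cC} r_i(\bx)$ for the optimal Chebyshev value over any index set, and note the elementary monotonicity $w(\cB) \le w(\cC)$ whenever $\cB \subseteq \cC$, since maximising over fewer residuals can only decrease the inner objective. Hence only the reverse inequality $w(\cB) \ge w(\cC)$ needs work: I must produce a set $\cB$ of at most $d+1$ indices whose Chebyshev value is already as large as that of the full set $\cC$. The guiding observation is that \eqref{equ:lp} lives in exactly $d+1$ variables $(\bx,\gamma)$, so the number ``$d+1$'' should arise as the dimension of this lifted space.

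First I would fix an optimal solution $(\bx^\ast,\gamma^\ast)$ of \eqref{equ:lp}, which exists because the program is feasible (any sufficiently large $\gamma$ works) and its objective is bounded below by $0$, so the LP attains its optimum. Each constraint $|\ba_i^T\bx - b_i| \le \gamma$ is the conjunction of the two halfspaces $\ba_i^T\bx - \gamma \le b_i$ and $-\ba_i^T\bx - \gamma \le -b_i$, so \eqref{equ:lp} is an ordinary LP in $\mathbb{R}^{d+1}$ whose objective gradient is the fixed vector $(\bZero,1)$. Writing the first-order optimality (KKT) conditions at $(\bx^\ast,\gamma^\ast)$, this gradient lies in the convex cone generated by the gradients of the constraints that are active at the optimum, with nonnegative multipliers.

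The reduction to $d+1$ then comes from Carath\'eodory's theorem applied in $\mathbb{R}^{d+1}$: any vector in the conic hull of finitely many vectors of $\mathbb{R}^{d+1}$ is already a nonnegative combination of at most $d+1$ of them. Selecting such a subcollection of active-constraint gradients and letting $\cB$ collect the data indices $i$ that generated them gives $|\cB| \le d+1$. I would then check that $(\bx^\ast,\gamma^\ast)$ is still feasible for the LP restricted to the constraints indexed by $\cB$, and that the surviving multipliers form a valid KKT certificate for that restricted LP; since an LP is convex, these conditions are sufficient for optimality, so the restricted optimum is again $\gamma^\ast$. This yields $w(\cB) = \gamma^\ast = w(\cC)$ and hence \eqref{equ:basis}.

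The main obstacle is bookkeeping rather than depth: I must convert from \emph{active linear constraints} (of which there can be up to $2|\cC|$) back to \emph{data indices}, making sure a single index contributing both of its halfspaces is counted once so that the bound stays at $d+1$ distinct indices, and I must dispose of the degenerate regimes $|\cC| \le d$ and $\gamma^\ast = 0$ (the slab collapsing to a hyperplane), where attainment and the choice of the active set need care. A cleaner purely geometric alternative I would keep in reserve is Helly's theorem: for any $\gamma < w(\cC)$ the slabs $\{\bx : r_i(\bx) \le \gamma\}$, $i \in \cC$, have empty intersection, so some $\le d+1$ of them already fail to meet; letting $\gamma \nearrow w(\cC)$ and using that there are only finitely many candidate subsets, one such subset $\cB$ recurs and satisfies $w(\cB) \ge w(\cC)$, which closes the argument.
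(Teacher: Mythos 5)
Your proof is correct, but it takes a genuinely different route from the paper: the paper does not prove this lemma at all, instead citing it as a classical fact from Chebyshev approximation theory (Cheney, \emph{Introduction to Approximation Theory}, Section 2.3), whereas you construct a self-contained argument. Your main line --- lift to the LP \eqref{equ:lp} in $\mathbb{R}^{d+1}$, take KKT multipliers at an optimum (which exist unconditionally here since all constraints are affine), apply the conic version of Carath\'{e}odory's theorem to retain at most $d+1$ active-constraint gradients, and observe that the surviving multipliers certify optimality of $(\bx^\ast,\gamma^\ast)$ for the restricted LP --- is sound, and it correctly explains why the bound is $d+1$ rather than $d+2$: conic (not convex-hull) Carath\'{e}odory in a $(d+1)$-dimensional space. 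Your bookkeeping concerns are real but benign: when $\gamma^\ast>0$ the two halfspaces of a single index cannot both be active, and when $\gamma^\ast=0$ a repeated index only shrinks $\cB$; the case $|\cC|\le d+1$ is trivial by taking $\cB=\cC$. (One small sign slip: for the minimisation of $\gamma$ it is the \emph{negative} objective gradient $(\bZero,-1)$ that lies in the cone of active-constraint gradients $(\pm\ba_i,-1)$, but this does not affect the argument.) Your reserve Helly argument is also valid and is essentially the textbook proof. What the paper's citation buys is brevity and a direct link to the de la Vall\'{e}e-Poussin machinery it invokes immediately afterwards for solving $\LP[\cB]$ analytically; what your proof buys is a verifiable, self-contained derivation using only LP optimality conditions, which fits naturally with how the paper already uses $\LP[\cC]$ (active constraints, bases) in Lemma~\ref{lem:trueoutlier} and Algorithm~\ref{alg:tree}.
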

\begin{proof}
See~\cite[Section 2.3]{cheney66}. \qqed
\end{proof}

We call $\cB$ a \emph{basis} of $\cC$. Mathematically, $\cB$ is the set of active constraints to $\LP[\cC]$, hence bases can be computed easily. In fact, $\LP[\cB]$ and $\LP[\cC]$ have the same minimisers. Further, for any subset $\cB$ of size $d+1$, a method by de la Vall\'{e}e-Poussin can solve $\LP[\cB]$ analytically in time polynomial to $d$; see~\cite[Chapter 2]{cheney66} for details.

Let $\bx$ be an arbitrary candidate solution to MAXCON, and $(\hat{\bx}, \hat{\gamma})$ be the minimisers to $\LP[\cC_\epsilon(\bx \mid \cD)]$, i.e., the Chebyshev approximation problem on the consensus set of $\bx$. The following property can be established.

\begin{lemma}\label{lem:psi}
$\mathrm{\Psi}_\epsilon(\hat{\bx} \mid \cD) \ge \mathrm{\Psi}_\epsilon(\bx \mid \cD)$.
\end{lemma}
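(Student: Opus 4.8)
The plan is to prove the stronger set-inclusion statement $\cC_\epsilon(\bx \mid \cD) \subseteq \cC_\epsilon(\hat{\bx} \mid \cD)$, from which the claimed inequality on consensus \emph{sizes} follows at once by monotonicity of cardinality, since $\mathrm{\Psi}_\epsilon(\cdot \mid \cD) = |\cC_\epsilon(\cdot \mid \cD)|$.

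First I would establish that the original candidate $\bx$ is itself feasible for $\LP[\cC_\epsilon(\bx \mid \cD)]$. By the very definition of the consensus set, every index $i \in \cC_\epsilon(\bx \mid \cD)$ satisfies $|\ba_i^T\bx - b_i| \le \epsilon$, so the pair $(\bx, \epsilon)$ meets all the constraints of the linear program. Because $(\hat{\bx}, \hat{\gamma})$ is the \emph{minimiser} of the objective $\gamma$, optimality immediately forces $\hat{\gamma} \le \epsilon$.

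Next I would exploit the constraints at the optimum. Since $(\hat{\bx}, \hat{\gamma})$ is feasible for $\LP[\cC_\epsilon(\bx \mid \cD)]$, it satisfies $|\ba_i^T\hat{\bx} - b_i| \le \hat{\gamma}$ for every $i \in \cC_\epsilon(\bx \mid \cD)$. Chaining this with the bound $\hat{\gamma} \le \epsilon$ from the previous step yields $|\ba_i^T\hat{\bx} - b_i| \le \epsilon$ for every such $i$, which is precisely the condition $i \in \cC_\epsilon(\hat{\bx} \mid \cD)$. This proves the desired inclusion, and taking cardinalities gives $\mathrm{\Psi}_\epsilon(\hat{\bx} \mid \cD) \ge \mathrm{\Psi}_\epsilon(\bx \mid \cD)$.

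There is no serious obstacle: the argument is a short feasibility/optimality chase. The only point requiring care is the direction of the inequality $\hat{\gamma} \le \epsilon$, which is the crux of the proof — it is what guarantees that re-centring the consensus set of $\bx$ into its thinnest Chebyshev slab never ejects any of the original inliers from the $\epsilon$-band. I would therefore state explicitly that it is the \emph{minimisation} (rather than maximisation) of $\gamma$, together with the witness $(\bx,\epsilon)$, that delivers $\hat{\gamma}\le\epsilon$.
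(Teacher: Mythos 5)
Your proof is correct and follows essentially the same route as the paper's: the paper's ``by construction, $\hat{\gamma} \le \epsilon$'' is exactly your feasibility-of-$(\bx,\epsilon)$ argument, and the paper likewise chains $|\ba_i^T\hat{\bx} - b_i| \le \hat{\gamma} \le \epsilon$ to conclude every inlier of $\bx$ remains an inlier of $\hat{\bx}$. Your version merely makes explicit the optimality step that the paper leaves implicit, and phrases the conclusion as the set inclusion $\cC_\epsilon(\bx \mid \cD) \subseteq \cC_\epsilon(\hat{\bx} \mid \cD)$ rather than directly comparing counts.
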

\begin{proof}
By construction, $\hat{\gamma} \le \epsilon$. Hence, if $(\ba_i, b_i)$  is an inlier to $\bx$, i.e., $|\ba^T_i \bx - b_i| \le \epsilon$, then $|\ba_i^T\hat{\bx} -b_i | \le \hat{\gamma} \le \epsilon$, i.e., $(\ba_i, b_i)$ is also an inlier to $\hat{\bx}$. Thus, the consensus of $\hat{\bx}$ is no smaller than the consensus of $\bx$. \qqed
\end{proof}

Lemmas~\ref{lem:basis} and~\ref{lem:psi} suggest a rudimentary algorithm for consensus maximisation that attempts to find the basis of the maximum consensus set, as encapsulated in the proof of the following theorem.

\begin{theorem}\label{thm:xp}
MAXCON is XP (slice-wise polynomial) in the dimension $d$.
\end{theorem}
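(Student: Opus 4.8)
The plan is to convert Lemmas~\ref{lem:basis} and~\ref{lem:psi} into an exhaustive search over small candidate subsets. The key observation is that an optimiser of MAXCON need not be sought over all of $\mathbb{R}^d$: Lemma~\ref{lem:psi} lets me assume that an optimiser coincides with the Chebyshev solution on its own consensus set, and Lemma~\ref{lem:basis} then shows that this Chebyshev solution is already pinned down by a subset of at most $d+1$ data points. Consequently, some subset $\cB$ of cardinality $\le d+1$ ``exposes'' the optimum, and enumerating all such subsets is guaranteed to encounter it.

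Concretely, I would present the following algorithm. For every subset $\cB \subseteq \{1,\dots,N\}$ with $|\cB| \le d+1$, solve $\LP[\cB]$ to obtain a candidate solution $\bx_\cB$ (via the de la Vall\'{e}e-Poussin procedure noted above, which handles each subset in time polynomial in $d$), evaluate its consensus $\Psi_\epsilon(\bx_\cB \mid \cD)$ in $\cO(Nd)$ time, and retain the best. To verify correctness, let $\bx^\ast$ attain the maximum consensus $\psi^\ast$ with consensus set $\cC^\ast = \cC_\epsilon(\bx^\ast \mid \cD)$, and let $\hat{\bx}$ minimise $\LP[\cC^\ast]$. Lemma~\ref{lem:psi} gives $\Psi_\epsilon(\hat{\bx} \mid \cD) \ge \psi^\ast$, so $\hat{\bx}$ is itself optimal, while Lemma~\ref{lem:basis} supplies a basis $\cB^\ast \subseteq \cC^\ast$ with $|\cB^\ast| \le d+1$ for which $\LP[\cB^\ast]$ shares the minimiser $\hat{\bx}$. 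Since $\cB^\ast$ is among the enumerated subsets, the algorithm evaluates $\bx_{\cB^\ast} = \hat{\bx}$ and records consensus $\psi^\ast$, which, being maximal, is exactly the value returned.

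The complexity count then closes the argument: there are $\sum_{j=0}^{d+1}\binom{N}{j} = \cO(N^{d+1})$ subsets to examine, each costing $\poly(d)$ to solve and $\cO(Nd)$ to score, for a total runtime of $\cO(N^{d+1})\,\poly(N,d)$, which is polynomial in $N$ for every fixed $d$ --- precisely the XP guarantee. I expect the main obstacle to lie not in this counting but in the correctness bookkeeping: one must ensure that solving $\LP[\cB^\ast]$ returns the particular minimiser $\hat{\bx}$ carrying the full consensus set $\cC^\ast$, rather than some alternative optimum arising in a degenerate configuration. I would lean on the stated fact that $\LP[\cB]$ and $\LP[\cC]$ have identical minimisers, remarking that any residual ambiguity can be dissolved by a standard tie-breaking or perturbation argument.
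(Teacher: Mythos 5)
Your proposal is correct and follows essentially the same route as the paper: invoke Lemma~\ref{lem:psi} to replace an optimiser by the Chebyshev minimiser on its consensus set, invoke Lemma~\ref{lem:basis} to reduce that minimiser to a basis of at most $d+1$ points, then enumerate all such subsets and score each candidate, giving an $\cO(N^{d+2}\poly(d))$-type bound that is polynomial in $N$ for fixed $d$. The degeneracy caveat you raise is handled in the paper exactly as you suggest, by the stated fact that $\LP[\cB]$ and $\LP[\cC]$ share the same minimisers.
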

\begin{proof}
Let $\bx^*$ be a witness to an instance of MAXCON-D with positive answer, i.e., $\mathrm{\Psi}_\epsilon(\bx^* \mid \cD) \ge \psi$. Let $(\hat{\bx}^*, \hat{\gamma}^*)$ be the minimisers to $\LP[\cC_\epsilon(\bx^* \mid \cD)]$. By Lemma~\ref{lem:psi}, $\hat{\bx}^*$ is also a positive witness to the instance. By Lemma~\ref{lem:basis}, $\hat{\bx}^*$ can be found by enumerating all $(d+1)$-subsets of $\cD$, and solving Chebyshev approximation~\eqref{equ:chebyshev} on each $(d+1)$-subset. There are a total of $\binom{N}{d+1}$ subsets to check; including the time to evaluate $\mathrm{\Psi}_\epsilon(\bx \mid \cD)$ for each candidate, the runtime of this simple algorithm is $\mathcal{O}(N^{d+2}\poly(d))$, which is polynomial in $N$ for a fixed $d$. \qqed
\end{proof}

Theorem~\ref{thm:xp} shows that for a fixed dimension $d$, MAXCON can be solved in time polynomial in the number of measurements $N$ (this is consistent with the results in~\cite{enqvist12,enqvist15}). However, this does not imply that MAXCON is tractable (following the standard meaning of tractability in complexity theory~\cite{garey90,downey99}). Moreover, in practical applications, $d$ could be large (e.g., $d \ge 5$), thus the rudimentary algorithm above will not be efficient for large $N$.

\subsection{W[1]-hard in the dimension}\label{sec:w1hard}

Can we remove $d$ from the exponent of the runtime of a globally optimal algorithm? By establishing W[1]-hardness in the dimension, this section shows that it is not possible. Our proofs are inspired by, but extends quite significantly from, that of~\cite[Section 5]{giannopoulos09}. First, the source problem is as follows.

\begin{problem}[k-CLIQUE]
Given undirected graph $G = (V, E)$ with vertex set $V$ and edge set $E$ and a parameter $k \in \mathbb{N}_+$, does there exist a clique in $G$ with $k$ vertices?
\end{problem}

k-CLIQUE is W[1]-hard w.r.t.~parameter $k$~\cite{wiki:parametrised}. Here, we demonstrate an FPT reduction from k-CLIQUE to MAXCON-D with fixed dimension $d$.

\subsubsection{Generating the input data}

Given input graph $G = (V, E)$, where $V = \{1,\dots,M \}$, and size $k$, we construct a $(k+1)$-dimensional point set $\cD_{G} = \{ (\ba_i, b_i) \}^{N}_{i=1} = \cD_V \cup \cD_E$ as follows:
\begin{itemize}[leftmargin=1em]
\item The set $\cD_V$ is defined as
\begin{align}
\cD_V = \{ (\ba^v_\alpha, b^v_\alpha) \}_{\alpha = 1,\dots,k}^{v = 1, \dots, M},
\end{align}
where
\begin{align}
\ba^v_\alpha = \left[0, \dots, 0, 1, 0, \dots, 0 \right]^T
\end{align}
is a $k$-dimensional vector of $0$'s except at the $\alpha$-th element where the value is $1$, and
\begin{align}
b^v_\alpha = v.
\end{align}
\item The set  $\cD_E$ is defined as
\begin{align}\label{equ:edgeset}
\begin{aligned}
\cD_E = \{ (\ba^{u,v}_{\alpha,\beta}, b^{u,v}_{\alpha,\beta}) \mid~&u, v = 1,\dots,M, \\
&\langle u, v \rangle \in E, \langle v, u \rangle \in E,\\
&\alpha, \beta = 1, \dots, k,\\
&\alpha < \beta~\},
\end{aligned}
\end{align}
where
\begin{align}
\ba^{u,v}_{\alpha,\beta} = \left[0, \dots, 0, 1, 0, \dots, 0, M, 0, \dots, 0 \right]^T
\end{align}
is a $k$-dimensional vector of $0$'s, except at the $\alpha$-th element where the value is $1$ and the $\beta$-th element where the value is $M$, and
\begin{align}
b^{u,v}_{\alpha,\beta} = u + Mv.
\end{align}
\end{itemize}
The size $N$ of $\cD_{G}$ is thus $|\cD_V| + |\cD_E| = kM + 2|E|\binom{k}{2}$.

\subsubsection{Setting the inlier threshold}

Under our reduction, $\bx \in \mathbb{R}^d$ is responsible for ``selecting" a subset of the vertices $V$ and edges $E$ of $G$. First, we say that $\bx$ selects vertex $v$ if a point $(\ba^v_\alpha, b^v_\alpha) \in \cD_V$, for some $\alpha$, is an inlier to $\bx$, i.e., if
\begin{align}\label{eq:VertexInlier}
| (\ba^v_\alpha)^T\bx - b^v_\alpha | \le \epsilon \equiv x_\alpha \in [v - \epsilon, v + \epsilon],
\end{align}
where $x_\alpha$ is the $\alpha$-th element of $\bx$. The key question is how to set the value of the inlier threshold $\epsilon$, such that $\bx$ selects no more than $k$ vertices, or equivalently, such that $\mathrm{\Psi}_\epsilon(\bx \mid \cD_V) \le k$ for all $\bx$.

\begin{lemma}\label{lem:V}
If $\epsilon < \frac{1}{2}$, then $\mathrm{\Psi}_\epsilon(\bx \mid \cD_V) \le k$, with equality achieved if and only if $\bx$ selects $k$ vertices of $G$.
\end{lemma}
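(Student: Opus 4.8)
The plan is to exploit the way $\cD_V$ decomposes across the $k$ coordinate slots $\alpha = 1,\dots,k$, and to reduce the whole count $\mathrm{\Psi}_\epsilon(\bx \mid \cD_V)$ to a sum of $k$ independent one-dimensional counts. First I would observe that for the vertex point $(\ba^v_\alpha, b^v_\alpha)$ the residual collapses to $|(\ba^v_\alpha)^T\bx - b^v_\alpha| = |x_\alpha - v|$, since $\ba^v_\alpha$ is the $\alpha$-th standard basis vector and $b^v_\alpha = v$. Hence, by~\eqref{eq:VertexInlier}, this point is an inlier to $\bx$ exactly when $x_\alpha \in [v-\epsilon, v+\epsilon]$, and the inlier count splits as
\begin{align}
\mathrm{\Psi}_\epsilon(\bx \mid \cD_V) = \sum_{\alpha=1}^{k} \bigl| \{ v \in \{1,\dots,M\} : |x_\alpha - v| \le \epsilon \} \bigr|.
\end{align}

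The core step is a packing argument applied to each summand. For a fixed $\alpha$, the admissible $v$ are exactly the integers lying in the closed interval $[x_\alpha - \epsilon, x_\alpha + \epsilon]$, whose length is $2\epsilon < 1$ by hypothesis. Since distinct integers differ by at least $1$, an interval of length strictly less than $1$ contains at most one integer, so each summand is either $0$ or $1$. Summing over the $k$ slots immediately yields $\mathrm{\Psi}_\epsilon(\bx \mid \cD_V) \le k$, which establishes the bound.

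For the equality characterisation I would argue both directions from this decomposition. If $\mathrm{\Psi}_\epsilon(\bx \mid \cD_V) = k$, then every one of the $k$ summands must equal $1$, i.e.\ for each $\alpha$ there is a (unique) vertex $v_\alpha$ with $x_\alpha \in [v_\alpha - \epsilon, v_\alpha + \epsilon]$; by the definition of selection preceding the lemma this says precisely that $\bx$ selects a vertex at each of the $k$ slots, i.e.\ $\bx$ selects $k$ vertices. Conversely, if $\bx$ selects $k$ vertices then each slot $\alpha$ contributes at least one inlier, so each summand is $\ge 1$, and combined with the per-slot upper bound of $1$ every summand equals $1$, giving $\mathrm{\Psi}_\epsilon(\bx \mid \cD_V) = k$.

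The step I expect to carry the real weight is the packing argument together with stating the selection relationship cleanly: the implication $2\epsilon < 1 \Rightarrow$ at most one integer per interval is what pins the per-slot count to $\{0,1\}$ and makes $\epsilon < \tfrac{1}{2}$ exactly the right threshold (any larger $\epsilon$ would allow a slot to capture two adjacent vertices and break the bound). I would also be careful that ``selects $k$ vertices'' is read slot-wise --- one selection per coordinate $\alpha$ --- rather than as $k$ \emph{distinct} graph vertices; distinctness is not needed here and is instead enforced later through the edge set $\cD_E$, so conflating the two is the main place where the argument could appear to fail.
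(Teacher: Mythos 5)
Your proof is correct and takes essentially the same approach as the paper: the paper phrases the packing argument as the intervals $[u-\epsilon,u+\epsilon]$ and $[v-\epsilon,v+\epsilon]$ for distinct integers being disjoint when $\epsilon<\frac{1}{2}$, which is the dual view of your observation that $[x_\alpha-\epsilon, x_\alpha+\epsilon]$ has length $2\epsilon<1$ and so contains at most one integer. Your explicit per-slot decomposition, treatment of the equality case, and the slot-wise (with multiplicity) reading of ``selects $k$ vertices'' are all slightly more careful than the paper's proof, which leaves these implicit.
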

\begin{proof}
For any $u$ and $v$, the ranges $[u-\epsilon, u+\epsilon]$ and $[v-\epsilon, v+\epsilon]$ cannot overlap if $\epsilon < \frac{1}{2}$. Hence, $x_\alpha$ lies in at most one of the ranges, i.e., each element of $\bx$ selects at most one of the vertices; see Fig.~\ref{fig:lemma3}. This implies that $\mathrm{\Psi}_\epsilon(\bx \mid \cD_V) \le k$. \qqed
\end{proof}

\begin{figure}[t]\centering 
\subfigure{\includegraphics[width=0.59\textwidth]{./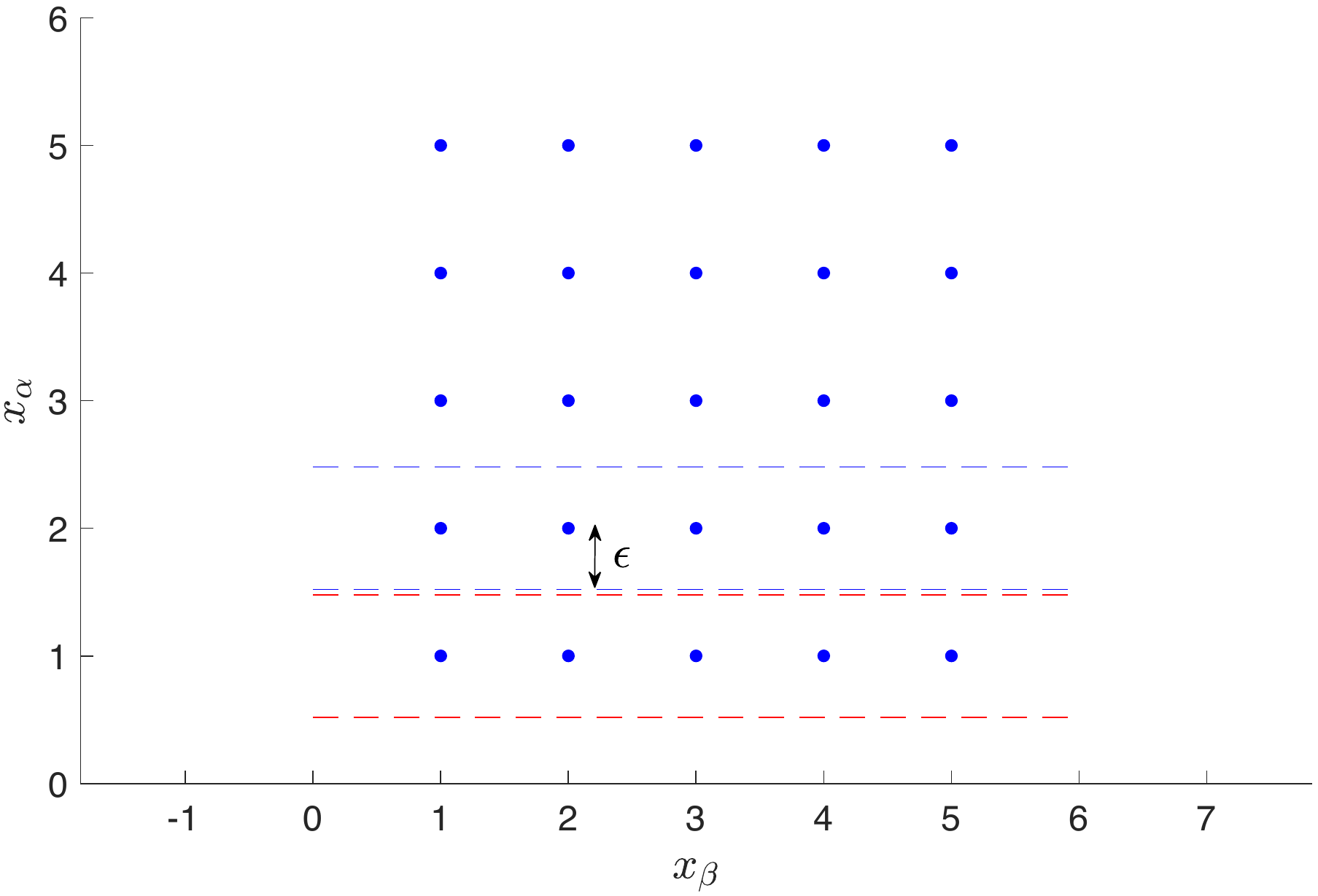}}
\caption{The blue dots indicate the integer values in the dimensions $x_\alpha$ and $x_\beta$. If $\epsilon<\frac{1}{2}$, then the ranges defined by~\eqref{eq:VertexInlier} for all $v = 1,\dots,M$ do not overlap. Hence, $x_\alpha$ can select at most one vertex of the graph.}
\label{fig:lemma3}
\end{figure}

Second, a point $(\ba^{u,v}_{\alpha,\beta}, b^{u,v}_{\alpha,\beta})$ from $\cD_E$ is an inlier to $\bx$ if
\begin{align}\label{equ:edgeinlier}
| (\ba^{u,v}_{\alpha,\beta})^T\bx - b^{u,v}_{\alpha,\beta} | \le \epsilon \equiv | (x_\alpha-u) + M(x_\beta - v)| \le \epsilon.
\end{align}
As suggested by~\eqref{equ:edgeinlier}, the pairs of elements of $\bx$ are responsible for selecting the edges of $G$. To prevent each element pair $x_\alpha, x_\beta$ from selecting more than one edge, or equivalently, to maintain $\mathrm{\Psi}_\epsilon(\bx \mid \cD_E) \le \binom{k}{2}$, the setting of $\epsilon$ is crucial.

\begin{lemma}\label{lem:E}
If $\epsilon < \frac{1}{2}$, then $\mathrm{\Psi}_\epsilon(\bx \mid \cD_E) \le \binom{k}{2}$, with equality achieved if and only if $\bx$ selects $\binom{k}{2}$ edges of $G$.
\end{lemma}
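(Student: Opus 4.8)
The plan is to follow the template of the proof of Lemma~\ref{lem:V}, now exploiting the base-$M$ encoding deliberately built into the edge data. The $\binom{k}{2}$ index pairs $(\alpha,\beta)$ with $\alpha<\beta$ partition $\cD_E$ into $\binom{k}{2}$ groups, the group for $(\alpha,\beta)$ consisting of all edge points sharing those indices. I will show that, when $\epsilon<\frac{1}{2}$, the solution $\bx$ can select at most one edge point from each group; summing over the groups then gives $\mathrm{\Psi}_\epsilon(\bx\mid\cD_E)\le\binom{k}{2}$.

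First I would fix an index pair $(\alpha,\beta)$ and suppose, for contradiction, that two edge points indexed by distinct ordered pairs $(u,v)$ and $(u',v')$ are both inliers to $\bx$. By the inlier condition~\eqref{equ:edgeinlier},
\begin{align}
|(x_\alpha-u)+M(x_\beta-v)|\le\epsilon \quad\text{and}\quad |(x_\alpha-u')+M(x_\beta-v')|\le\epsilon .
\end{align}
Subtracting the two expressions inside the absolute values cancels the $x_\alpha$ and $x_\beta$ terms, and the triangle inequality yields $|(u'-u)+M(v'-v)|\le 2\epsilon<1$. The crucial step is to observe that $(u'-u)+M(v'-v)$ is an integer, so a strict bound below $1$ forces it to equal $0$.

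Next I would invoke the range constraint $u,v,u',v'\in\{1,\dots,M\}$, which gives $|u'-u|\le M-1<M$. Since $u'-u=-M(v'-v)$ is then a multiple of $M$ of magnitude strictly less than $M$, it must vanish, whence $v=v'$ and in turn $u=u'$, contradicting the assumption that the two ordered pairs were distinct. This establishes that each index pair selects at most one edge point, and hence the claimed inequality.

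Finally, equality $\mathrm{\Psi}_\epsilon(\bx\mid\cD_E)=\binom{k}{2}$ holds exactly when every one of the $\binom{k}{2}$ groups contributes precisely one inlier, which is by definition the statement that $\bx$ selects $\binom{k}{2}$ edges of $G$. I expect the only delicate point to be the integrality argument—recognising that the integer $(u'-u)+M(v'-v)$ has magnitude below $1$ and is therefore zero, together with the base-$M$ uniqueness that then pins down $(u,v)=(u',v')$; the remaining steps are routine bookkeeping entirely analogous to Lemma~\ref{lem:V}.
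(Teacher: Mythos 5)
Your proof is correct and takes essentially the same route as the paper's: fix an index pair $(\alpha,\beta)$, combine the two inlier constraints so that $x_\alpha$ and $x_\beta$ cancel, and conclude that $|(u'-u)+M(v'-v)|\le 2\epsilon<1$ forces $(u,v)=(u',v')$, so each of the $\binom{k}{2}$ index pairs admits at most one edge-point inlier. The only difference is presentational: the paper derives the same cancellation via a system of four slab inequalities and argues geometrically that distinct slabs in the $(x_\alpha,x_\beta)$ plane cannot intersect, whereas you make the integrality and base-$M$ uniqueness step explicit (a step the paper leaves implicit until Lemma~\ref{lem:combine}), which is if anything slightly more rigorous.
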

\begin{proof}
For each $\alpha, \beta$ pair, the constraint~\eqref{equ:edgeinlier} is equivalent to the two linear inequalities
\begin{align}\label{equ:oneslab}
\begin{aligned}
x_\alpha + Mx_\beta - u - Mv &\le \epsilon, \\
x_\alpha + Mx_\beta - u - Mv &\ge -\epsilon,
\end{aligned}
\end{align}
which specify two opposing half-planes (i.e., a slab) in the space $(x_\alpha,x_\beta)$.  Note that the slopes of the half-plane boundaries do not depend on $u$ and $v$. For any two unique pairs $(u_1, v_1)$ and $(u_2, v_2)$, we have the four linear inequalities
\begin{align}\label{equ:twoslabs}
\begin{aligned}
x_\alpha + Mx_\beta - u_1 - Mv_1 &\le \epsilon, \\
x_\alpha + Mx_\beta - u_1 - Mv_1 &\ge -\epsilon, \\
x_\alpha + Mx_\beta - u_2 - Mv_2 &\le \epsilon, \\
x_\alpha + Mx_\beta - u_2 - Mv_2 &\ge -\epsilon.
\end{aligned}
\end{align}
The system~\eqref{equ:twoslabs} can be simplified to
\begin{align}\label{equ:twoineq}
\begin{aligned}
\frac{1}{2}\left[ u_2 - u_1 + M(v_2 - v_1) \right] &\le \epsilon, \\
\frac{1}{2}\left[ u_1 - u_2 + M(v_1 - v_2) \right] &\le \epsilon.
\end{aligned}
\end{align}
Setting $\epsilon < \frac{1}{2}$ ensures that the two inequalities~\eqref{equ:twoineq} cannot be consistent for all unique pairs $(u_1, v_1)$ and $(u_2, v_2)$. Geometrically, with $\epsilon < \frac{1}{2}$, the two slabs defined by~\eqref{equ:oneslab} for different $(u_1, v_1)$ and $(u_2, v_2)$ pairs do not intersect; see Fig.~\ref{fig:lemma4} for an illustration.

\begin{figure}[t]\centering 
\subfigure{\includegraphics[width=0.6\columnwidth]{./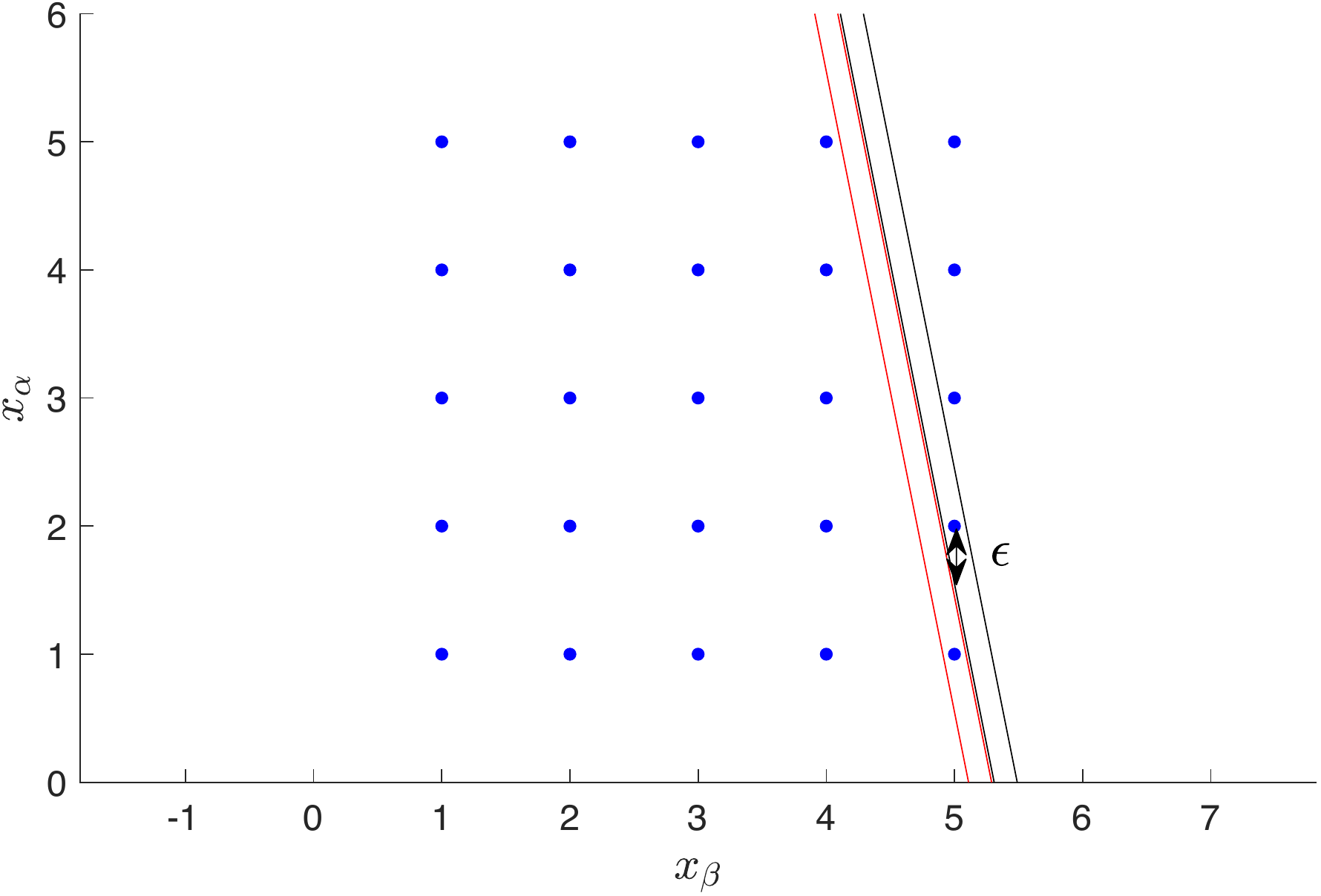}}
\caption{The blue dots indicate the integer values in the dimensions $x_\alpha$ and $x_\beta$. If $\epsilon<\frac{1}{2}$, then any two slabs defined by~\eqref{equ:oneslab} for different $(u_1, v_1)$ and $(u_2, v_2)$ pairs do not intersect. The figure shows two slabs corresponding to $u_1 = 1$, $v_1 = 5$, $u_2 = 2$, $v_2 = 5$.}
\label{fig:lemma4}
\end{figure}

Hence, if $\epsilon < \frac{1}{2}$, each element pair $x_\alpha, x_\beta$ of $\bx$ can select at most one of the edges. Cumulatively, $\bx$ can select at most $\binom{k}{2}$ edges, thus $\mathrm{\Psi}_{\epsilon}(\bx \mid \cD_E) \le \binom{k}{2}$. \qqed
\end{proof}

Up to this stage, we have shown that if $\epsilon < \frac{1}{2}$, then $\mathrm{\Psi}_\epsilon(\bx \mid \cD_G) \le k + \binom{k}{2}$, with equality achievable if there is a clique of size $k$ in $G$. To establish the FPT reduction, we need to establish the reverse direction, i.e., if $\mathrm{\Psi}_\epsilon(\bx \mid \cD_G) = k + \binom{k}{2}$, then there is a $k$-clique in $G$. The following lemma shows that this can be assured by setting $\epsilon<\frac{1}{M+2}$.

\begin{lemma}\label{lem:combine}
If $\epsilon < \frac{1}{M+2}$, then $\mathrm{\Psi}_\epsilon(\bx \mid \cD_G) \le k + \binom{k}{2}$, with equality achievable if and only if there is a clique of size $k$ in $G$.
\end{lemma}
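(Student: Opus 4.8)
The plan is to decompose $\Psi_\epsilon(\bx \mid \cD_G) = \Psi_\epsilon(\bx \mid \cD_V) + \Psi_\epsilon(\bx \mid \cD_E)$ and lean on the two preceding lemmas. Since $\epsilon < \frac{1}{M+2} < \frac12$, both Lemma~\ref{lem:V} and Lemma~\ref{lem:E} are in force, so the upper bound $\Psi_\epsilon(\bx \mid \cD_G) \le k + \binom{k}{2}$ is immediate. For the achievability of equality when $G$ contains a $k$-clique $\{v_1,\dots,v_k\}$, I would exhibit the explicit witness $x_\alpha = v_\alpha$: each coordinate then lands exactly on an integer, so all $k$ vertex points and all $\binom{k}{2}$ edge points (taking $u = v_\alpha$, $v = v_\beta$ for the pair $\alpha<\beta$, which exists because the clique is complete) have residual $0$ and are therefore inliers. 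This reproves the "equality achievable if'' direction already noted before the lemma.

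The substance is the converse: assuming $\Psi_\epsilon(\bx \mid \cD_G) = k + \binom{k}{2}$, extract a $k$-clique. First I would note that, because the total attains the sum of the two individual maxima, equality must hold \emph{separately} in Lemma~\ref{lem:V} and Lemma~\ref{lem:E}. By their equality cases, each coordinate $x_\alpha$ selects exactly one vertex $\tilde v_\alpha$, so $x_\alpha \in [\tilde v_\alpha - \epsilon,\, \tilde v_\alpha + \epsilon]$, and each pair $(\alpha,\beta)$ with $\alpha<\beta$ selects exactly one edge, say $(u,v)$, satisfying~\eqref{equ:edgeinlier}.

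The key step, and the reason the threshold is sharpened from $\frac12$ to $\frac{1}{M+2}$, is to show the edge selected by $(\alpha,\beta)$ is forced to be $(\tilde v_\alpha, \tilde v_\beta)$. Writing $x_\alpha = \tilde v_\alpha + \delta_\alpha$ and $x_\beta = \tilde v_\beta + \delta_\beta$ with $|\delta_\alpha|, |\delta_\beta| \le \epsilon$, I would substitute into~\eqref{equ:edgeinlier} and bound $|\delta_\alpha + M\delta_\beta| \le (M+1)\epsilon$, yielding $|(\tilde v_\alpha - u) + M(\tilde v_\beta - v)| \le (M+2)\epsilon < 1$. As the left-hand expression is an integer, it must vanish. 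Then, using $|\tilde v_\alpha - u| \le M-1 < M$ together with the fact that $\tilde v_\alpha - u = M(v - \tilde v_\beta)$ is a multiple of $M$, I conclude $u = \tilde v_\alpha$ and hence $v = \tilde v_\beta$. Since $\cD_E$ contains no self-loops (an edge demands $\langle u,v\rangle, \langle v,u\rangle \in E$ in a simple graph, forcing $u \ne v$), this simultaneously gives $\tilde v_\alpha \ne \tilde v_\beta$ and $\langle \tilde v_\alpha, \tilde v_\beta\rangle \in E$. Ranging over all $\binom{k}{2}$ pairs, the $\tilde v_\alpha$ are pairwise distinct and pairwise adjacent, so $\{\tilde v_1,\dots,\tilde v_k\}$ is a $k$-clique.

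I expect the main obstacle to be precisely this integrality-plus-multiple-of-$M$ coupling argument. The slab-disjointness of Lemma~\ref{lem:E} only guarantees one edge per coordinate pair; it does not, on its own, tie that edge to the vertices chosen by the $\cD_V$ gadget. The tighter threshold is exactly what couples the two gadgets, ensuring the selected edge is ``rounded'' consistently with the selected endpoints. Verifying this coupling, and in particular the $u \ne v$ consequence that rules out degenerate (repeated-vertex) selections, is the delicate part, whereas the upper bound and the forward construction are routine.
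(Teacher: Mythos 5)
Your proof is correct and takes essentially the same route as the paper: the upper bound and forward direction come from Lemmas~\ref{lem:V} and~\ref{lem:E}, and the converse hinges on exactly the same coupling estimate, namely that the integer quantity $(\tilde v_\alpha - u) + M(\tilde v_\beta - v)$ is bounded in magnitude by $(M+2)\epsilon < 1$ once the fractional parts are controlled by $(M+1)\epsilon$. The only differences are cosmetic --- the paper runs the argument by contradiction (integer part $\ge 1$ versus fractional bound $\le \frac{M+1}{M+2}$), whereas you argue directly (integer part equals $0$, then the multiple-of-$M$ structure forces $u = \tilde v_\alpha$ and $v = \tilde v_\beta$), and you make explicit the pairwise distinctness of the selected vertices, which the paper leaves implicit.
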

\begin{proof}
The `only if' direction has already been proven. To prove the `if' direction, we show that if $\epsilon<\frac{1}{M+2}$ and $\mathrm{\Psi}_\epsilon(\bx \mid \cD_G) = k + \binom{k}{2}$, the subgraph S($\bx$) = $\{\lfloor x_1 \rceil,...,\lfloor x_k \rceil\}$ is a k-clique, where each $\lfloor x_\alpha \rceil$ represents a vertex index in G. Since $\epsilon<\frac{1}{2}$, $\lfloor x_\alpha \rceil = u$ if and only if $(\ba^u_\alpha, b^u_\alpha)$ is an inlier. Therefore, S($\bx$) consists of all vertices selected by $\bx$. From Lemma~\ref{lem:V} and Lemma~\ref{lem:E}, when $\mathrm{\Psi}_\epsilon(\bx \mid \cD_G) = k + \binom{k}{2}$, $\bx$ is consistent with k points in $\cD_V$ and $\binom{k}{2}$ points in $\cD_E$. The inliers in $\cD_V$ specifies the k vertices in S($\bx$). The `if' direction is true if all selected $\binom{k}{2}$ edges are only edges in S($\bx$), i.e., for each inlier point $(\ba^{u,v}_{\alpha,\beta}, b^{u,v}_{\alpha,\beta})\in\cD_E$, $(\ba^u_\alpha, b^u_\alpha)$ and $(\ba^v_\beta, b^v_\beta)$ are also inliers w.r.t. $\bx$. The prove is done by contradiction:

If $\epsilon<\frac{1}{M+2}$, given an inlier $(\ba^{u,v}_{\alpha,\beta}, b^{u,v}_{\alpha,\beta})$, from~\eqref{equ:edgeinlier} we have:
\begin{align}\label{eq:lem_combine_contradiction}
\begin{aligned}
&| (x_\alpha-u) + M(x_\beta - v)| =\\ 
&|[(\lfloor x_\alpha\rceil -u) + M(\lfloor x_\beta\rceil - v)] + [(x_\alpha-\lfloor x_\alpha\rceil) + M(x_\beta - \lfloor x_\beta\rceil)]| < \frac{1}{M+2}.
\end{aligned}
\end{align}    
Assume at least one of $(\ba^u_\alpha, b^u_\alpha)$ and $(\ba^v_\beta, b^v_\beta)$ is not an inlier, from~\eqref{eq:VertexInlier} and $\epsilon<\frac{1}{M+2}$, we have $\lfloor x_\alpha\rceil \neq u$ or $\lfloor x_\beta\rceil \neq v$, which means that at least one of $(\lfloor x_\alpha\rceil -u)$ and $(\lfloor x_\beta\rceil -v)$ is not zero. Since all elements of $\bx$ satisfy~\eqref{eq:VertexInlier}, both $(\lfloor x_\alpha\rceil -u)$ and $(\lfloor x_\beta\rceil -v)$ are integers between $[-(M-1),(M-1)]$. If only one of $(\lfloor x_\alpha\rceil -u)$ and $(\lfloor x_\beta\rceil -v)$ is not zero, then $|(\lfloor x_\alpha\rceil -u) + M(\lfloor x_\beta\rceil - v)| \geq |1+M\cdot0| = 1$. If both are not zero, then $|(\lfloor x_\alpha\rceil -u) + M(\lfloor x_\beta\rceil - v)| \geq |(M-1)+M\cdot1| = 1$ Therefore, we have
\begin{align}\label{eq:lem_combine1}
|(\lfloor x_\alpha\rceil -u) + M(\lfloor x_\beta\rceil - v)|\geq 1.
\end{align} 
Also due to~\eqref{eq:VertexInlier}, we have
\begin{align}\label{eq:lem_combine2}
|(x_\alpha-\lfloor x_\alpha\rceil) + M(x_\beta - \lfloor x_\beta\rceil)| \leq (M+1)\cdot\epsilon = \frac{M+1}{M+2}. 
\end{align} 
Combining~\eqref{eq:lem_combine1} and~\eqref{eq:lem_combine2}, we have  
\begin{align}
\begin{aligned}
&|[(\lfloor x_\alpha\rceil -u) + M(\lfloor x_\beta\rceil - v)] + [(x_\alpha-\lfloor x_\alpha\rceil) + M(x_\beta - \lfloor x_\beta\rceil)]|\geq\\
&|[(\lfloor x_\alpha\rceil -u) + M(\lfloor x_\beta\rceil - v)]| - |[(x_\alpha-\lfloor x_\alpha\rceil) + M(x_\beta - \lfloor x_\beta\rceil)]|\geq\\
&1-\frac{M+1}{M+2} = \frac{1}{M+2},
\end{aligned}
\end{align} 
which contradicts~\eqref{eq:lem_combine_contradiction}. It is obvious that S($\bx$) can be computed within linear time. Hence, the `if' direction is true when $\epsilon<\frac{1}{M+2}$.\qqed
\end{proof}

To illustrate Lemma~\ref{lem:combine}, Fig.~\ref{fig:lemma5} depicts the value of $\mathrm{\Psi}_\epsilon( \bx \mid \cD_G )$ in the subspace $(x_\alpha, x_\beta)$ for $\epsilon < \frac{1}{M+2}$. Observe that $\mathrm{\Psi}_\epsilon( \bx \mid \cD_G )$ attains the highest value of $3$ in this subspace if and only if $x_\alpha$ and $x_\beta$ select a pair of vertices that are connected by an edge in $G$.

\begin{figure}[t]\centering 
\subfigure{\includegraphics[width=0.6\columnwidth]{./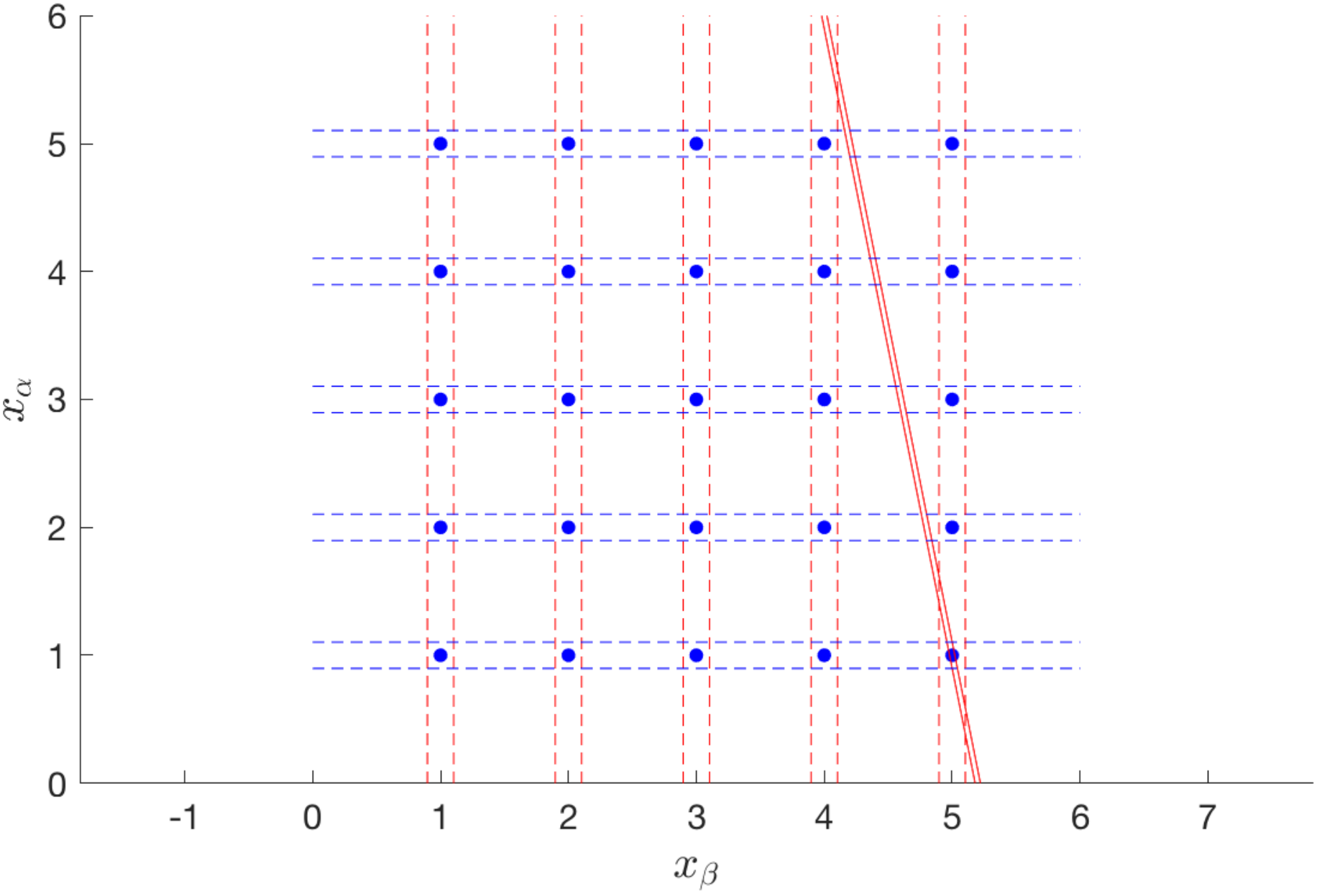}}
\caption{If $\epsilon<\frac{1}{M+2}$, then the slab~\eqref{equ:oneslab} that contains a point $(\ba^{u,v}_{\alpha,\beta}, b^{u,v}_{\alpha,\beta})\in\cD_E$, where $(u,v)$ is an edge in $\cG$, does not intersect with any grid region besides the one formed by $(\ba^u_\alpha, b^u_\alpha)$ and $(\ba^v_\beta, b^v_\beta)$. In this figure, $u = 1$ and $v = 5$.}
\label{fig:lemma5}
\end{figure}

\subsubsection{Completing the reduction}

We have demonstrated a reduction from k-CLIQUE to MAXCON-D, where the main work is to generate data $\cD_G$ which has number of measurements $N = k|V| + 2|E|\binom{k}{2}$ that is linear in $|G|$ and polynomial in $k$, and dimension $d = k$. In other words, the reduction is FPT in $k$. Setting $\epsilon < \frac{1}{M+2}$ and $\psi = k + \binom{k}{2}$ completes the reduction.

\begin{theorem}\label{thm:w1hard}
MAXCON is W[1]-hard w.r.t. the dimension $d$.
\end{theorem}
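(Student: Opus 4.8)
The plan is to observe that the construction of $\cD_G$ together with Lemmas~\ref{lem:V}, \ref{lem:E} and~\ref{lem:combine} already furnishes a complete parameterised (FPT) reduction from k-CLIQUE to MAXCON-D, so the theorem follows by invoking the closure of the class W[1] under such reductions. Recall that k-CLIQUE is W[1]-hard with respect to the clique size $k$; to transfer this hardness to MAXCON-D parameterised by $d$, I must exhibit a map $(G,k) \mapsto (\cD_G, \epsilon, \psi)$ satisfying the three defining properties of a parameterised reduction: it preserves yes/no instances, it bounds the target parameter $d$ by a computable function of the source parameter $k$, and it is computable in time $f(k)\,\poly(|G|)$.

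Correctness is the crux and is supplied by Lemma~\ref{lem:combine}. Setting $\epsilon < \frac{1}{M+2}$ and $\psi = k + \binom{k}{2}$, that lemma states that $\Psi_\epsilon(\bx \mid \cD_G) \le k + \binom{k}{2}$ for every $\bx$, with equality attainable if and only if $G$ contains a $k$-clique. Hence there exists an $\bx$ with $\Psi_\epsilon(\bx \mid \cD_G) \ge \psi$ precisely when $G$ has a clique of size $k$: yes-instances of k-CLIQUE map to yes-instances of MAXCON-D, and no-instances to no-instances.

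For the parameter bound, I note that the construction fixes $d = k$, so the target parameter is literally the identity function of the source parameter. For the running time, the data set has size $N = k|V| + 2|E|\binom{k}{2}$, which is polynomial in $|G|$ and in $k$; each coordinate $\ba^v_\alpha$, $b^v_\alpha$, $\ba^{u,v}_{\alpha,\beta}$, $b^{u,v}_{\alpha,\beta}$ is written down in constant time, and $\epsilon$, $\psi$ are computed directly, so the whole map runs in FPT (indeed polynomial) time. All three conditions therefore hold, so MAXCON-D is W[1]-hard in $d$; since the optimisation problem MAXCON trivially solves its decision version, MAXCON is W[1]-hard in $d$ as well.

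I expect the only genuine obstacle to lie not in this assembly step but in the correctness already discharged by Lemma~\ref{lem:combine} --- in particular its reverse direction, where one must rule out a vector $\bx$ that attains consensus $k + \binom{k}{2}$ by selecting $\binom{k}{2}$ edge points whose endpoints are not among the $k$ selected vertex points. Preventing such spurious solutions is exactly what forces the refined threshold $\epsilon < \frac{1}{M+2}$ (rather than merely $\epsilon < \frac{1}{2}$), and is the delicate part of the whole argument; granted the lemmas, the theorem itself is a short bookkeeping verification that the three reduction conditions are met.
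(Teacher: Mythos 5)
Your proposal is correct and follows essentially the same route as the paper: the paper's own proof of Theorem~\ref{thm:w1hard} is precisely the observation that the construction of $\cD_G$ together with Lemmas~\ref{lem:V}, \ref{lem:E} and~\ref{lem:combine}, the setting $\epsilon < \frac{1}{M+2}$, $\psi = k + \binom{k}{2}$, and the bounds $d = k$ and $N = k|V| + 2|E|\binom{k}{2}$ constitute an FPT reduction from k-CLIQUE, so W[1]-hardness transfers. Your explicit verification of the three conditions of a parameterised reduction, and your remark that the delicate content lives in the reverse direction of Lemma~\ref{lem:combine}, match the paper's reasoning exactly.
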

\begin{proof}
Since k-CLIQUE is W[1]-hard w.r.t.~$k$, by the above FPT reduction, MAXCON is W[1]-hard w.r.t.~$d$. \qqed
\end{proof}

The implications of Theorem~\ref{thm:w1hard} have been discussed in Sec.~\ref{sec:contributions}.

\subsection{FPT in the number of outliers and dimension}\label{sec:fpt}

Let $f(\cC)$ and $\hat{\bx}_\cC$ respectively indicate the minimised objective value and minimiser of $\LP[\cC]$. Consider two subsets $\cP$ and $\cQ$ of $\cD$, where $\cP \subseteq \cQ$. The statement
\begin{align}
f(\cP) \le f(\cQ)
\end{align}
follows from the fact that $\LP[\cP]$ contains only a subset of the constraints of $\LP[\cQ]$; we call this property \emph{monotonicity}.

Let $\bx^*$ be a global solution of an instance of MAXCON, and let $\cI^* := \cC_\epsilon(\bx^* \mid \cD) \subset \cD$ be the maximum consensus set. Let $\cC$ index a subset of $\cD$, and let $\cB$ be the basis of $\cC$. If $f(\cC) > \epsilon$, then by Lemma~\ref{lem:basis}
\begin{align}
f(\cD) \ge f(\cC) =  f(\cB) > \epsilon.
\end{align}
The monotonicity property affords us further insight.

\begin{lemma}\label{lem:trueoutlier}
At least one point in $\cB$ do not exist in $\cI^*$.
\end{lemma}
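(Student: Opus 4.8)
The plan is to argue by contradiction, combining the \emph{monotonicity} property with the basis identity $f(\cB) = f(\cC)$ and the defining property of a consensus set. The operative hypothesis is $f(\cC) > \epsilon$, carried over from the displayed chain of inequalities immediately preceding the lemma; under this hypothesis the set $\cC$ (equivalently its basis $\cB$) cannot be enclosed in a slab of half-width $\epsilon$. The claim ``at least one point in $\cB$ does not exist in $\cI^*$'' is just the assertion $\cB \not\subseteq \cI^*$.

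First I would record the key fact about $\cI^*$. Since $\cI^* = \cC_\epsilon(\bx^* \mid \cD)$, every index $i \in \cI^*$ satisfies $|\ba_i^T \bx^* - b_i| \le \epsilon$, so $\bx^*$ is a feasible point for $\LP[\cI^*]$ with objective value at most $\epsilon$. Hence $f(\cI^*) \le \epsilon$. This is the only place the global optimality of $\bx^*$ enters, and it enters only through the weaker fact that $\bx^*$ certifies feasibility of the Chebyshev LP on its own consensus set.

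Next I would suppose, toward a contradiction, that $\cB \subseteq \cI^*$, i.e.\ every point of the basis lies in the maximum consensus set. Monotonicity then yields $f(\cB) \le f(\cI^*) \le \epsilon$. On the other hand, $\cB$ is a basis of $\cC$, so by Lemma~\ref{lem:basis} we have $f(\cB) = f(\cC)$, and the standing hypothesis gives $f(\cC) > \epsilon$. Chaining these gives $\epsilon < f(\cC) = f(\cB) \le \epsilon$, a contradiction. Therefore $\cB \not\subseteq \cI^*$, so at least one point of $\cB$ lies outside $\cI^*$, as required.

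The argument is short, and I do not anticipate a genuine obstacle: the entire content is in pairing monotonicity with the basis identity, both already supplied in the excerpt. The only step requiring any care is the first one — correctly extracting $f(\cI^*) \le \epsilon$ from the definition of the consensus set — and even this reduces to the observation that $\bx^*$ is already feasible for $\LP[\cI^*]$ at cost $\le \epsilon$.
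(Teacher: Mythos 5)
Your proof is correct and follows essentially the same route as the paper's: both derive a contradiction from monotonicity, the basis identity $f(\cB) = f(\cC) > \epsilon$, and the fact that $f(\cI^*) \le \epsilon$. The only cosmetic difference is that the paper phrases monotonicity via $f(\cB) \le f(\cI^* \cup \cB)$ and concludes $\cI^* \cup \cB \ne \cI^*$, whereas you assume $\cB \subseteq \cI^*$ directly and apply monotonicity to that inclusion — logically the same argument.
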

\begin{proof}
By monotonicity,
\begin{align}\label{equ:monotonicity}
\epsilon < f(\cB) \le  f(\cI^* \cup \cB).
\end{align}
Hence, $\cI^* \cup \cB$ cannot be equal to $\cI^*$, for if they were equal, then $f(\cI^* \cup \cB) = f(\cI^*) \le \epsilon$ which violates~\eqref{equ:monotonicity}. \qqed
\end{proof}

The above observations suggest an algorithm for MAXCON that recursively removes basis points to find a consensus set, as summarised in Algorithm~\ref{alg:tree}. This algorithm is a special case of the technique of Chin et al.~\cite{chin15}. Note that in the worst case, Algorithm~\ref{alg:tree} finds a solution with consensus $d$ (i.e., the minimal case to fit $\bx$), if there are no solutions with higher consensus to be found.

\begin{algorithm}
\begin{algorithmic}[1]
\REQUIRE $\cD = \{ \ba_i, b_i \}^{N}_{i=1}$, threshold $\epsilon$.
\STATE $\cC \leftarrow \{1,\dots,N \}$, $\hat{\bx} \leftarrow$ NULL, $\hat{\psi} \leftarrow 0$.
\STATE $[\hat{\bx}, \hat{\psi}] \leftarrow \mathrm{fitRem(\cC, \epsilon, \hat{\bx}, \hat{\psi})}$.
\RETURN $\hat{\bx}$.
\end{algorithmic}
\vspace{1em}
$[\hat{\bx}, \hat{\psi}] = \mathrm{fitRem(\cC, \epsilon, \hat{\bx}, \hat{\psi})}$
\begin{algorithmic}[1]
\STATE Solve $\LP[\cC]$ to obtain $f(\cC)$, $\hat{\bx}_{\cC}$, and the basis $\cB$ of $\cC$.
\IF{$f(\cC) \le \epsilon$}
\IF{$|\cC| > \hat{\psi}$}
\STATE $\hat{\bx} \leftarrow \hat{\bx}_{\cC}$, $\hat{\psi} \leftarrow |\cC|$.~~//Found a better consensus set.
\ENDIF
\ELSE
\FOR{each $i \in \cB$}
\STATE $[\hat{\bx}, \hat{\psi}] \leftarrow \mathrm{fitRem(\cC\setminus i, \epsilon, \hat{\bx}, \hat{\psi})}$.~~//Remove points from basis and refit.
\ENDFOR
\ENDIF
\RETURN $\hat{\bx}$ and $\hat{\psi}$.
\end{algorithmic}
\caption{FPT algorithm for MAXCON.}\label{alg:tree}
\end{algorithm}

\begin{theorem}\label{thm:fpt}
MAXCON is FPT in the number of outliers and dimension.
\end{theorem}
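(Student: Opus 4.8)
The plan is to analyze Algorithm~\ref{alg:tree} as a bounded-depth branch-and-bound search over consensus sets, and to bound both the branching factor and the useful search depth by the two parameters $d$ and $o$, where $o = N - \psi^\ast$ is the number of outliers and $\psi^\ast = |\cI^\ast|$ is the maximum consensus. The branching factor is immediate from Lemma~\ref{lem:basis}: whenever $f(\cC) > \epsilon$, the procedure $\mathrm{fitRem}$ recurses on $\cC \setminus \{i\}$ for each $i$ in the basis $\cB$, and $|\cB| \le d+1$, so every internal node has at most $d+1$ children. Together with the polynomial solvability of~\eqref{equ:lp}, this makes each node cheap; the whole difficulty is in controlling the depth.

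First I would establish correctness, i.e.\ that some branch reaches $\cI^\ast$, by an induction along a single ``good'' path that maintains the invariant $\cC \supseteq \cI^\ast$, starting from the root $\cC = \{1,\dots,N\} \supseteq \cI^\ast$. At a node with $\cC \supseteq \cI^\ast$, either $f(\cC) \le \epsilon$, in which case $\cC$ is itself a consensus set with $|\cC| \ge |\cI^\ast| = \psi^\ast$, forcing $\cC = \cI^\ast$ by maximality; or $f(\cC) > \epsilon$, in which case Lemma~\ref{lem:trueoutlier} guarantees a basis point $i \in \cB$ with $i \notin \cI^\ast$, so the child $\cC \setminus \{i\}$ still satisfies $\cC \setminus \{i\} \supseteq \cI^\ast$ while $|\cC \setminus \{i\}| = |\cC| - 1$. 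Since each good step removes exactly one point and preserves $\cI^\ast$, the path reaches a node equal to $\cI^\ast$ after at most $N - \psi^\ast = o$ removals; combined with Lemma~\ref{lem:psi} to certify that the recovered Chebyshev minimiser attains consensus $\psi^\ast$, this shows the optimum is found at depth $o$.

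Next I would bound the search to depth $o$ to obtain the FPT running time. The key observation is a matching lower bound on depth: along any branch, $|\cC| = N - (\text{depth})$, so a node with $f(\cC) \le \epsilon$ at depth $t$ would give a consensus set of size $N - t$; since $\psi^\ast = N - o$ is maximal, no node at depth below $o$ can satisfy $f(\cC) \le \epsilon$. Hence running the search with increasing depth limits (iterative deepening, equivalently a breadth-first exploration) terminates the first time a feasible node appears, which by the previous paragraph is precisely at depth $o$ and yields $\psi^\ast$. The recursion tree truncated at depth $o$ contains at most $\sum_{t=0}^{o}(d+1)^t = \cO\big((d+1)^o\big)$ nodes, and at each node we solve one instance of $\LP[\cC]$, extract its basis, and evaluate a consensus value, each in $\poly(N,d)$ time. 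The total running time is therefore $\cO\big((d+1)^o\big)\poly(N,d) = \cO(d^o)\poly(N,d)$, which is fixed-parameter tractable in $(o,d)$.

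The main obstacle is exactly this depth control: Algorithm~\ref{alg:tree}, if run to completion, descends until $f(\cC) \le \epsilon$ on every branch, which may reach depth $\Theta(N-d)$ and produce a tree of size exponential in $N$ rather than in $o$. The care needed is to argue that the optimum already appears at depth $o$ (via the good-path induction and Lemma~\ref{lem:trueoutlier}) and that no feasible node appears earlier (via the monotonicity lower bound), so that capping the exploration at depth $o$ — without knowing $o$ a priori, through iterative deepening — is both correct and sufficient. A secondary point to verify is that the per-node cost, including computing the basis $\cB$ and solving $\LP[\cC]$, is genuinely polynomial in $N$ and $d$, which follows from the polynomial solvability of~\eqref{equ:lp} noted earlier.
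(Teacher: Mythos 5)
Your proposal is correct, and its skeleton is the same as the paper's: analyse Algorithm~\ref{alg:tree}, bound the branching factor by $d+1$ via Lemma~\ref{lem:basis}, use Lemma~\ref{lem:trueoutlier} to exhibit a ``good'' path that keeps $\cI^\ast$ intact and reaches it after exactly $o$ removals, and charge $\poly(N,d)$ per node for solving \eqref{equ:lp}, giving $\cO\bigl((d+1)^o\bigr)\poly(N,d)$ overall. Where you genuinely depart from the paper is on depth control, and your version is the more rigorous one. The paper's proof simply asserts that $o$ ``is also the maximum tree depth searched by the algorithm,'' but that is not literally true of Algorithm~\ref{alg:tree} as written: the recursion on a branch stops only when $f(\cC)\le\epsilon$, so a branch that happens to delete points of $\cI^\ast$ can keep failing the feasibility test and descend to depth $\Theta(N)$ (e.g., in $d=1$, many coincident inliers plus two distant outliers produce branches that strip off inliers one at a time), so the $(d+1)^o$ node count does not follow for the verbatim depth-first search. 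Your two observations close this gap exactly: (i) no node at depth $t<o$ can satisfy $f(\cC)\le\epsilon$, since its Chebyshev minimiser would then have consensus $N-t>\psi^\ast$; and (ii) the good path guarantees a feasible node at depth $o$, which, having $|\cC|=N-o=\psi^\ast$, is automatically optimal. Hence a breadth-first or iteratively deepened exploration, stopped at the first feasible depth, is both correct and confined to $\sum_{t=0}^{o}(d+1)^t$ nodes, even though $o$ is not known in advance. This is essentially what the cited algorithm of Chin et al.~\cite{chin15} does (best-first search), and it is the reading under which the paper's stated runtime actually holds; your write-up makes that step explicit rather than assumed. One cosmetic point: invoking Lemma~\ref{lem:psi} at the end is unnecessary, since $f(\cI^\ast)\le\epsilon$ already certifies that the Chebyshev minimiser of $\cI^\ast$ attains consensus $\psi^\ast$.
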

\begin{proof}
Algorithm~\ref{alg:tree} conducts a depth-first tree search to find a recursive sequence of \emph{basis} points to remove from $\cD$ to yield a consensus set. By Lemma~\ref{lem:trueoutlier}, the longest sequence of basis points that needs to be removed is $o = N-|\cI^\ast|$, which is also the maximum tree depth searched by the algorithm (each descend of the tree removes one point). The number of nodes visited is of order $(d+1)^o$, since the branching factor of the tree is $|\cB|$, and by Lemma~\ref{lem:basis}, $|\cB| \le d+1$.

At each node, $\LP[\cC]$ is solved, with the largest of these LPs having $d+1$ variables and $N$ constraints. Algorithm~\ref{alg:tree} thus runs in $\cO(d^o \mathrm{poly}(N,d))$ time, which is exponential only in the number of outliers $o$ and dimension $d$.\qqed
\end{proof}

Using~\cite[Theorem 2.3]{matousek95} and the repeated basis detection and avoidance procedure in~\cite[Sec.~3.1]{chin15}, the complexity of Algorithm~\ref{alg:tree} can be improved to $\cO((o+1)^d\mathrm{poly}(N,d))$. See~\cite[Sec.~3.5]{chin17} for details.

\section{Approximability}\label{sec:apxhard}

Given the inherent intractability of MAXCON, it is natural to seek recourse in approximate solutions. However, this section shows that it is not possible to construct PTAS~\cite{vazirani01} for MAXCON.

Our development here is inspired by~\cite[Sec.~3.2]{amaldi95}. First, we define our source problem: given a set of $k$ Boolean variables $\{ v_j \}^{k}_{j=1}$, a \emph{literal} is either one of the variables, e.g., $v_j$, or its negation, e.g., $\neg v_j$. A \emph{clause} is a disjunction over a set of literals, i.e., $v_1 \vee \neg v_2 \vee v_3$. A \emph{truth assignment} is a setting of the values of the $k$ variables. A clause is \emph{satisfied} if it evaluates to true.

\begin{problem}[MAX-2SAT]
Given $M$ clauses $\cK = \{ \cK_i \}^{M}_{i=1}$ over $k$ Boolean variables $\{ v_j \}^{k}_{j=1}$, where each clause has exactly two literals, what is the maximum number of clauses that can be satisfied by a truth assignment?
\end{problem}

MAX-2SAT is APX-hard~\cite{wiki:max-2sat}, meaning that there are no algorithms that run in polynomial time that can approximately solve MAX-2SAT up to a desired error ratio. Here, we show an L-reduction~\cite{wiki:l-reduction} from MAX-2SAT to MAXCON, which unfortunately shows that MAXCON is also APX-hard.

\subsubsection{Generating the input data}

Given an instance of MAX-2SAT with clauses $\cK = \{ \cK_i \}^{M}_{i=1}$ over variables $\{ v_j \}^{k}_{j=1}$, let each clause $\cK_i$ be represented as $(\pm v_{\alpha_i})\vee(\pm v_{\beta_i})$, where $\alpha_i, \beta_i \in \{1, \dots, k\}$ index the variables that exist in $\cK_i$, and $\pm$ here indicates either a ``blank" (no negation) or $\neg$ (negation). Define
\begin{align}
\mathrm{sgn}(\alpha_i) = \begin{cases} +1 & \mathrm{if}~v_{\alpha_i}~\textrm{occurs without negation in}~\cK_i, \\ -1 & \mathrm{if}~v_{\alpha_i}~\textrm{occurs with negation in}~\cK_i; \end{cases}
\end{align}
similarly for $\mathrm{sgn}(\beta_i)$. Construct the input data for MAXCON as
\begin{align}
\cD_{\cK} = \{ (\ba^{p}_i,b^{p}_i) \}^{p = 1,\dots,6}_{i=1,\dots,M},
\end{align}
where there are six measurements for each clause. Namely, for each clause $\cK_i$,
\begin{itemize}
\item $\ba^{1}_i$ is a $k$-dimensional vector of zeros, except at the $\alpha_i$-th and $\beta_i$-th elements where the values are respectively $\mathrm{sgn}(\alpha_i)$ and $\mathrm{sgn}(\beta_i)$, and $b^{1}_i = 2$.
\item $\ba^{2}_i = \ba^{1}_i$ and $b^{2}_i = 0$.
\item $\ba^{3}_i$ is a $k$-dimensional vector of zeros, except at the $\alpha_i$-th element where the value is $\mathrm{sgn}(\alpha_i)$, and $b^{3}_i = -1$.
\item $\ba^{4}_i = \ba^{3}_i$ and $b^{4}_i = 1$.
\item $\ba^{5}_i$ is a $k$-dimensional vector of zeros, except at the $\beta_i$-th element where the value is $\mathrm{sgn}(\beta_i)$, and $b^{5}_i = -1$.
\item $\ba^{6}_i = \ba^{5}_i$ and $b^{6}_i = 1$.
\end{itemize}
The number of measurements $N$ in $\cD_{\cK}$ is $6M$.

\subsubsection{Setting the inlier threshold}

Given a solution $\bx \in \mathbb{R}^k$ for MAXCON, the six input measurements associated with $\cK_i$ are inliers under these conditions:
\begin{align}\label{equ:pair1}
\begin{aligned}
(\ba^1_i, b^1_i)~\textrm{is an inlier} &\iff |\mathrm{sgn}(\alpha_i)x_{\alpha_i} + \mathrm{sgn}(\beta_i)x_{\beta_i} - 2| \le \epsilon,\\
(\ba^2_i, b^2_i)~\textrm{is an inlier} &\iff |\mathrm{sgn}(\alpha_i)x_{\alpha_i} + \mathrm{sgn}(\beta_i)x_{\beta_i}| \le \epsilon,
\end{aligned}
\end{align}
\begin{align}\label{equ:pair2}
\begin{aligned}
(\ba^3_i, b^3_i)~\textrm{is an inlier} &\iff |\mathrm{sgn}(\alpha_i)x_{\alpha_i} + 1| \le \epsilon,\\
(\ba^4_i, b^4_i)~\textrm{is an inlier} &\iff |\mathrm{sgn}(\alpha_i)x_{\alpha_i} - 1| \le \epsilon,
\end{aligned}
\end{align}
\begin{align}\label{equ:pair3}
\begin{aligned}
(\ba^5_i, b^5_i)~\textrm{is an inlier} &\iff |\mathrm{sgn}(\beta_i)x_{\beta_i} + 1| \le \epsilon,\\
(\ba^6_i, b^6_i)~\textrm{is an inlier} &\iff |\mathrm{sgn}(\beta_i)x_{\beta_i} - 1| \le \epsilon,
\end{aligned}
\end{align}
where $x_{\alpha}$ is the $\alpha$-th element of $\bx$. Observe that if $\epsilon < 1$, then at most one of~\eqref{equ:pair1}, one of~\eqref{equ:pair2}, and one of~\eqref{equ:pair3} can be satisfied. The following result establishes an important condition for L-reduction.

\begin{lemma}
If $\epsilon < 1$, then
\begin{align}\label{equ:lreduc1}
\mathrm{OPT(\textrm{MAXCON})} \le 6\cdot \mathrm{OPT(\textrm{MAX-2SAT})},
\end{align}
$\mathrm{OPT(\textrm{MAX-2SAT})}$ is the maximum number of clauses that can be satisfied for a given MAX-2SAT instance, and  $\mathrm{OPT(\textrm{MAXCON})}$ is the maximum achievable consensus for the MAXCON instance generated under our reduction.
\end{lemma}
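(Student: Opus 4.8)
The plan is to combine two bounds: a crude per-clause cap on $\mathrm{OPT}(\textrm{MAXCON})$ and a standard satisfiability guarantee for MAX-2SAT, and to observe that the claimed ratio $6$ is exactly the product of the two constants involved.

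First I would invoke the observation stated just before the lemma: since $\epsilon < 1$, for each clause $\cK_i$ at most one of the pair~\eqref{equ:pair1}, one of~\eqref{equ:pair2}, and one of~\eqref{equ:pair3} can hold. The reason is that within each pair the two residuals are centred at values differing by exactly $2$ (for example, writing $S = \mathrm{sgn}(\alpha_i)x_{\alpha_i} + \mathrm{sgn}(\beta_i)x_{\beta_i}$, the two residuals in~\eqref{equ:pair1} are $|S-2|$ and $|S|$, whose sum is at least $2$ by the triangle inequality), so they cannot both be $\le \epsilon < 1$. Hence among the six measurements generated for $\cK_i$, at most three are inliers to any candidate $\bx \in \mathbb{R}^k$. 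Summing over all $M$ clauses gives $\mathrm{\Psi}_\epsilon(\bx \mid \cD_\cK) \le 3M$ for every $\bx$, so $\mathrm{OPT}(\textrm{MAXCON}) \le 3M$.

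Second I would lower-bound $\mathrm{OPT}(\textrm{MAX-2SAT})$ by $M/2$. The key fact is that any truth assignment $t$ and its bit-wise complement $\bar t$ together satisfy every clause: a two-literal clause is falsified by $t$ only when both its literals are false under $t$, and in that case both literals are true under $\bar t$, so $\bar t$ satisfies it. Thus the total number of (clause, assignment) satisfaction events over $t$ and $\bar t$ is at least $M$, so at least one of the two assignments satisfies at least $M/2$ clauses, giving $\mathrm{OPT}(\textrm{MAX-2SAT}) \ge M/2$.

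Combining the two estimates yields
\[
\mathrm{OPT}(\textrm{MAXCON}) \le 3M = 6 \cdot \tfrac{M}{2} \le 6\cdot \mathrm{OPT}(\textrm{MAX-2SAT}),
\]
which is exactly~\eqref{equ:lreduc1}. The only non-mechanical step is recognising that the per-clause cap of $3$ (already supplied by the preceding observation) must be paired with the $M/2$ satisfiability bound to reproduce the constant $6$; the bulk of the argument is then the short assignment/complement counting that establishes $\mathrm{OPT}(\textrm{MAX-2SAT}) \ge M/2$, and verifying that the arithmetic matches the stated ratio.
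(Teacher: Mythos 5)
Your proof is correct and follows the same decomposition as the paper: cap $\mathrm{OPT}(\textrm{MAXCON})$ at $3M$ using the fact that at most one measurement from each of the three pairs can be an inlier when $\epsilon < 1$, lower-bound $\mathrm{OPT}(\textrm{MAX-2SAT})$ by $M/2$, and multiply. The only point of divergence is the second bound: the paper simply cites Johnson's approximation algorithm to get $\mathrm{OPT}(\textrm{MAX-2SAT}) \ge \lceil M/2 \rceil$, whereas you prove it from scratch with the assignment/complement counting argument (any $t$ and its bitwise complement $\bar t$ jointly satisfy every two-literal clause, so one of them satisfies at least half). Your version is self-contained and arguably preferable in a write-up, since it removes the dependence on an external result; the paper's citation buys brevity and the marginally stronger ceiling $\lceil M/2 \rceil$, though the difference is immaterial here. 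Your explicit triangle-inequality justification of the ``at most one per pair'' observation is also a detail the paper leaves implicit, and it is accurate: the two residuals in each pair are centred at values exactly $2$ apart, so both cannot be below $\epsilon < 1$.
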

\begin{proof}
If $\epsilon < 1$, for all $\bx$, at most one of~\eqref{equ:pair1}, one of~\eqref{equ:pair2}, and one~\eqref{equ:pair3}, can be satisfied, hence $\mathrm{OPT(\textrm{MAXCON})}$ cannot be greater than $3M$. For any MAX-2SAT instance with $M$ clauses, there is an algorithm~\cite{johnson74} that can satisfy at least $\lceil \frac{M}{2} \rceil$ of the clauses, thus $\mathrm{OPT(\textrm{MAX-2SAT})} \ge \lceil \frac{M}{2} \rceil$. This leads to~\eqref{equ:lreduc1}.\qqed
\end{proof}

Note that, if $\epsilon < 1$, rounding $\bx$ to its nearest bipolar vector (i.e,, a vector that contains only $-1$ or $1$) cannot decrease the consensus w.r.t.~$\cD_\cK$. It is thus sufficient to consider $\bx$ that are bipolar in the rest of this section.

Intuitively, $\bx$ is used as a proxy for truth assignment: setting $x_j = 1$ implies setting $v_j = true$, and vice versa. Further, if one of the conditions in~\eqref{equ:pair1} holds for a given $\bx$, then the clause $\cK_i$ is satisfied by the truth assignment. Hence, for $\bx$ that is bipolar and $\epsilon < 1$,
\begin{align}
\mathrm{\Psi}_\epsilon( \bx \mid \cD_\cK ) = 2M + \sigma, 
\end{align}
where $\sigma$ is the number of clauses satisfied by $\bx$. This leads to the final necessary condition for L-reduction.

\begin{lemma}
If $\epsilon < 1$, then
\begin{align}\label{equ:lreduc2}
\left| \mathrm{OPT(\textrm{MAX-2SAT})} - \mathrm{SAT}(\bt(\bx)) \right| = \left| \mathrm{OPT(\textrm{MAXCON})} - \mathrm{\Psi}_\epsilon(\bx \mid \cD_\cK) \right|,
\end{align}
where $\bt(\bx)$ returns the truth assignment corresponding to $\bx$, and $\mathrm{SAT}(\bt(\bx))$ returns the number of clauses satisfied by $\bt(\bx)$.
\end{lemma}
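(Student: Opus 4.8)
The plan is to reduce both sides of \eqref{equ:lreduc2} to a single identity relating consensus to satisfied clauses, and then to observe that the two optima differ only by the fixed additive constant $2M$, which cancels. The crucial fact, already assembled just before the lemma, is that for bipolar $\bx$ and $\epsilon < 1$ we have $\mathrm{\Psi}_\epsilon(\bx \mid \cD_\cK) = 2M + \sigma$, where $\sigma = \mathrm{SAT}(\bt(\bx))$ is the number of clauses satisfied by the truth assignment $\bt(\bx)$ induced by $\bx$. By the rounding remark preceding the lemma, it suffices to treat $\bx$ as bipolar throughout.

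First I would pin down the exact relationship between the two optima. Since bipolar rounding never decreases consensus when $\epsilon < 1$, an optimal MAXCON solution can be taken bipolar, and for such a solution the identity above applies. Maximising consensus over bipolar $\bx$ is therefore equivalent to maximising $\mathrm{SAT}(\bt(\bx))$; because bipolar vectors and truth assignments are in bijection, this maximum equals $\mathrm{OPT(\textrm{MAX-2SAT})}$. Hence
\begin{align}
\mathrm{OPT(\textrm{MAXCON})} = 2M + \mathrm{OPT(\textrm{MAX-2SAT})}.
\end{align}

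Second, for the given (bipolar) $\bx$ I would substitute both $\mathrm{\Psi}_\epsilon(\bx \mid \cD_\cK) = 2M + \mathrm{SAT}(\bt(\bx))$ and the optima relation above into the right-hand side of \eqref{equ:lreduc2}, so that the $2M$ terms cancel and
\begin{align}
\mathrm{OPT(\textrm{MAXCON})} - \mathrm{\Psi}_\epsilon(\bx \mid \cD_\cK) = \mathrm{OPT(\textrm{MAX-2SAT})} - \mathrm{SAT}(\bt(\bx)).
\end{align}
Taking absolute values of both sides then yields \eqref{equ:lreduc2} directly. In fact both differences are non-negative, since each subtracts a feasible objective value from its corresponding optimum, so the absolute values are harmless and not strictly needed.

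The only real obstacle, and it is a mild one, is justifying that the MAXCON optimum is attained at a bipolar vector, so that the offset $2M$ is genuinely identical on both sides. This is entirely handled by the rounding observation: for $\epsilon < 1$, snapping any $\bx$ to its nearest bipolar vector cannot decrease $\mathrm{\Psi}_\epsilon(\bx \mid \cD_\cK)$. With that in place, everything else is a direct substitution and term-by-term cancellation.
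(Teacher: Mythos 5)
Your proof is correct and follows essentially the same route as the paper's: both rely on the identity $\mathrm{\Psi}_\epsilon(\bx \mid \cD_\cK) = 2M + \mathrm{SAT}(\bt(\bx))$ for bipolar $\bx$, observe that the MAXCON optimum has the form $2M + \sigma^*$ with $\sigma^* = \mathrm{OPT}(\textrm{MAX-2SAT})$ (justified by the bipolar rounding remark), and finish by substitution with cancellation of the common offset $2M$. Your write-up merely spells out more explicitly the bijection between bipolar vectors and truth assignments, which the paper leaves implicit.
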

\begin{proof}
For any bipolar $\bx$ with consensus $2M + \sigma$, the truth assignment $\bt(\bx)$ satisfies exactly $\sigma$ clauses. Since the value of $\mathrm{OPT(\textrm{MAXCON})}$ must take the form $2M + \sigma^*$, then $\mathrm{OPT(\textrm{MAX-2SAT})} = \sigma^*$. The condition~\eqref{equ:lreduc2} is immediately seen to hold by substituting the values into the equation.\qqed
\end{proof}

We have demonstrated an L-reduction from MAX-2SAT to MAXCON, where the main work is to generate $\cD_\cK$ in linear time. The function $\bt$ also takes linear time to compute. Setting $\epsilon < 1$ completes the reduction.

\begin{theorem}\label{thm:apxhard}
MAXCON is APX-hard.
\end{theorem}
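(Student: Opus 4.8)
The plan is to check that the construction built up over this section is a genuine L-reduction~\cite{wiki:l-reduction} from MAX-2SAT to MAXCON, and then to invoke the standard fact that an L-reduction carries a PTAS for the target problem back to a PTAS for the source. Contrapositively, since MAX-2SAT admits no PTAS~\cite{wiki:max-2sat}, neither does MAXCON, which is exactly APX-hardness. Concretely, an L-reduction consists of two polynomial-time maps — a forward map $f$ sending each MAX-2SAT instance to the MAXCON instance $\cD_\cK$, and a backward map $g$ sending each MAXCON solution to a truth assignment — together with positive constants $\alpha,\beta$ such that $\mathrm{OPT(\textrm{MAXCON})} \le \alpha\cdot\mathrm{OPT(\textrm{MAX-2SAT})}$ and $\left|\mathrm{OPT(\textrm{MAX-2SAT})} - \mathrm{SAT}(g(\bx))\right| \le \beta\left|\mathrm{OPT(\textrm{MAXCON})} - \mathrm{\Psi}_\epsilon(\bx \mid \cD_\cK)\right|$ for every feasible $\bx$.

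I would then simply read off all four ingredients from what is already established. The forward map $f$ is the six-measurements-per-clause construction of $\cD_\cK$, computable in linear time; the backward map is $g=\bt$, the truth-assignment read-off, also linear time. Fixing $\epsilon<1$, inequality~\eqref{equ:lreduc1} is precisely the first L-reduction condition with $\alpha=6$, and equation~\eqref{equ:lreduc2} gives the second condition as an equality, hence with $\beta=1$ (stronger than needed). With $f,g,\alpha,\beta$ all in hand, the reduction is complete and the theorem follows.

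The one point that needs care — and the main obstacle — is that~\eqref{equ:lreduc2} was derived only for \emph{bipolar} $\bx$, whereas the L-reduction condition must hold for \emph{every} $\bx\in\mathbb{R}^k$. I would close this gap using the rounding observation already recorded above: given arbitrary $\bx$, let $\bx'$ be its nearest bipolar vector, so that $g$ on $\bx$ is well-defined and polynomial-time as $\bt(\bx)=\bt(\bx')$. Since rounding to the nearest bipolar vector cannot decrease consensus, $\mathrm{\Psi}_\epsilon(\bx'\mid\cD_\cK)\ge\mathrm{\Psi}_\epsilon(\bx\mid\cD_\cK)$, so $\bx'$ is at least as close to $\mathrm{OPT(\textrm{MAXCON})}$ as $\bx$ is. Applying~\eqref{equ:lreduc2} to the bipolar $\bx'$ then yields
\begin{align}
\left| \mathrm{OPT(\textrm{MAX-2SAT})} - \mathrm{SAT}(\bt(\bx)) \right| &= \left| \mathrm{OPT(\textrm{MAXCON})} - \mathrm{\Psi}_\epsilon(\bx' \mid \cD_\cK) \right| \\
&\le \left| \mathrm{OPT(\textrm{MAXCON})} - \mathrm{\Psi}_\epsilon(\bx \mid \cD_\cK) \right|,
\end{align}
which is the second L-reduction condition with $\beta=1$ for arbitrary $\bx$. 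This upgrades the bipolar analysis to all feasible solutions, completing the L-reduction and thus establishing that MAXCON is APX-hard.
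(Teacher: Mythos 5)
Your proposal is correct and follows essentially the same route as the paper: the same six-measurements-per-clause construction, the same two conditions with $\alpha=6$ from~\eqref{equ:lreduc1} and $\beta=1$ from~\eqref{equ:lreduc2}, and the same appeal to the APX-hardness of MAX-2SAT. The only difference is that you spell out explicitly how the paper's rounding remark (that mapping $\bx$ to its nearest bipolar vector cannot decrease consensus when $\epsilon<1$) extends the second condition from bipolar $\bx$ to arbitrary $\bx$ --- a detail the paper leaves implicit, and which you handle correctly.
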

\begin{proof}
Since MAX-2SAT is APX-hard, by the above L-reduction, MAXCON is also APX-hard.\qqed
\end{proof}

See Sec.~\ref{sec:contributions} for the implications of Theorem~\ref{thm:apxhard}.

\section{Conclusions and future work}

Given the fundamental difficulty of consensus maximisation as implied by our results (see Sec.~\ref{sec:contributions}), it would be prudent to consider alternative paradigms for optimisation, e.g., deterministically convergent heuristic algorithms~\cite{le17,purkait17,cai18} or preprocessing techniques~\cite{svarm14,parrabustos15,chin16}.

\section*{Acknowledgements}

This work was supported by ARC Grant DP160103490.

\clearpage

\bibliographystyle{splncs}
\bibliography{final}
\end{document}